\newcommand{\ie}{\textit{i.e.,}~}
\newtheorem{theorem}{Theorem}[section]
\newtheorem{prop}{Proposition}
\title{Why think step by step? Reasoning emerges from the locality of experience}
\author{%
  Ben Prystawski \\
  Department of Psychology\\
  Stanford University\\
  Stanford, CA 94305 \\
  \texttt{benpry@stanford.edu} \\
  \And
  Michael Y. Li \\
  Department of Computer Science\\
  Stanford University\\
  Stanford, CA 94305 \\
  \texttt{michaelyli@stanford.edu} \\
  \And
  Noah D. Goodman \\
  Departments of Psychology and Computer Science\\
  Stanford University\\
  Stanford, CA 94305 \\
  \texttt{ngoodman@stanford.edu} \\
}
\begin{document}

\maketitle

\begin{abstract}
    Humans have a powerful and mysterious capacity to reason. Working through a set of mental steps enables us to make inferences we would not be capable of making directly even though we get no additional data from the world. Similarly, when large language models generate intermediate steps (a chain of thought) before answering a question, they often produce better answers than they would directly. We investigate why and how chain-of-thought reasoning is useful in language models, testing the hypothesis that reasoning is effective when training data consists of overlapping local clusters of variables that influence each other strongly. These training conditions enable the chaining of accurate local inferences to estimate relationships between variables that were not seen together in training. We prove that there will exist a ``reasoning gap'', where reasoning through intermediate variables reduces bias, for the simple case of an autoregressive density estimator trained on local samples from a chain-structured probabilistic model. We then test our hypothesis experimentally in more complex models, training an autoregressive language model on samples from Bayes nets but only including a subset of variables in each sample. We test language models’ ability to match conditional probabilities with and without intermediate reasoning steps, finding that intermediate steps are only helpful when the training data is locally structured with respect to dependencies between variables. The combination of locally structured observations and reasoning is much more data-efficient than training on all variables. Our results illustrate how the effectiveness of reasoning step by step is rooted in the local statistical structure of the training data.
\end{abstract}

\section{Introduction}

The human mind is a ship -- the immediate inferences we make from instinct keep us afloat, but reason is the compass that brings us to the shore of wisdom. Many tasks that we find hard to do immediately -- solving math problems, planning vacations, understanding our relatives -- become much easier when we talk ourselves through a reflective reasoning process. Likewise, by considering thought experiments or ``intuition pumps'' in science we can form strong beliefs -- such as that the rate at which an object falls should not depend on its mass -- purely by thinking through a set of steps \citep{shepard2008step, dennett2013intuition}. It is not \textit{a priori} obvious that step-by-step reasoning should be helpful. Reasoning does not give us any new data from the world, yet it can still improve our inferences. In investigating the origins of reasoning, we must thus ask, why does reasoning help at all?

Large language models have been shown capable of performing a wide variety of tasks by immediately answering a question \citep{brown2020language,chowdhery2022palm,touvron2023llama,srivastava2023beyond}. However, they struggle with some complex tasks like math word problems \citep{cobbe2021training}. A recent line of work has demonstrated that inducing language models to produce a ``chain of thought'' consisting of intermediate steps toward a solution, before giving an answer, leads to better performance than prompting them to produce an answer directly \citep{nye2021show, kojima2022large,wei2022chain,dohan2022language}. Other work has built on these findings, showing that providing worked solutions in context is helpful across a broad array of tasks \citep{lampinen2022can,zelikman2022star} and proposing methods to augment language models' reasoning performance \citep{wang2022self,zhou2022least,yao2023tree}. These findings raise an important question: \textit{why} is chain-of-thought reasoning useful?
Exploring this question in language models may also provide insight into the origins of human reasoning.

We hypothesize that chain-of-thought reasoning is useful in language models due to \textit{local} structure in the training data. 
Human experience is governed by our first-person perspective: we see and hear aspects of the world that are near to us in time and space. Yet our reasoning transcends these local bounds, supporting plans and conclusions that span distance and years.
Similarly, language models are trained on documents in natural language, which are usually about a few closely interconnected topics \citep{blei2003latent, blei2007correlated}. When concepts co-occur frequently in experience or training data, estimating the effect of one on the other is easy to do directly with simple statistical estimators. However, when we need to infer the effect of one piece of information on another but have not encountered them together, we must make a series of inferences that jump between pairs of concepts to connect what we know with what we want to infer. We posit that chain-of-thought reasoning becomes useful exactly when the training data is structured locally, in the sense that observations tend to occur in partially overlapping neighborhoods of related concepts. 

To illustrate, we may know the value of some variable $A$ and want to know about another variable $C$, so we try to estimate $P(C|A)$. However, if we need to estimate probabilities using observed samples from joint distributions and we have not often seen $A$ and $C$ together,  we would struggle to estimate $P(C|A)$ directly. Instead, we might estimate it by reasoning through intermediate variables. If conditioning on an intermediate variable $B$ renders $A$ and $C$ independent of each other, we can compute the conditional probability by marginalizing over $B$, using the fact that $P(C|A) = \sum_{B} P(C|B)P(B|A)$. For an example in natural language, suppose that we want to answer a question like ``What is the climate of France's capital?'' using a model trained on text from Wikipedia. Wikipedia pages for cities usually have information about the city's climate and Wikipedia pages for countries have information about capital cities, but pages for countries typically do not directly mention the climate of the capital city. It is possible that our model could succeed in answering the question directly, but it would likely have more success if it took the intermediate step of stating the capital of France. By introducing the step ``the capital of France is Paris'' before answering ``Paris has an oceanic climate,'' an autoregressive model can better leverage the dependencies that are well-represented in its training data.

\begin{figure}[tb]
    \centering
    \includegraphics[width=0.9\linewidth]{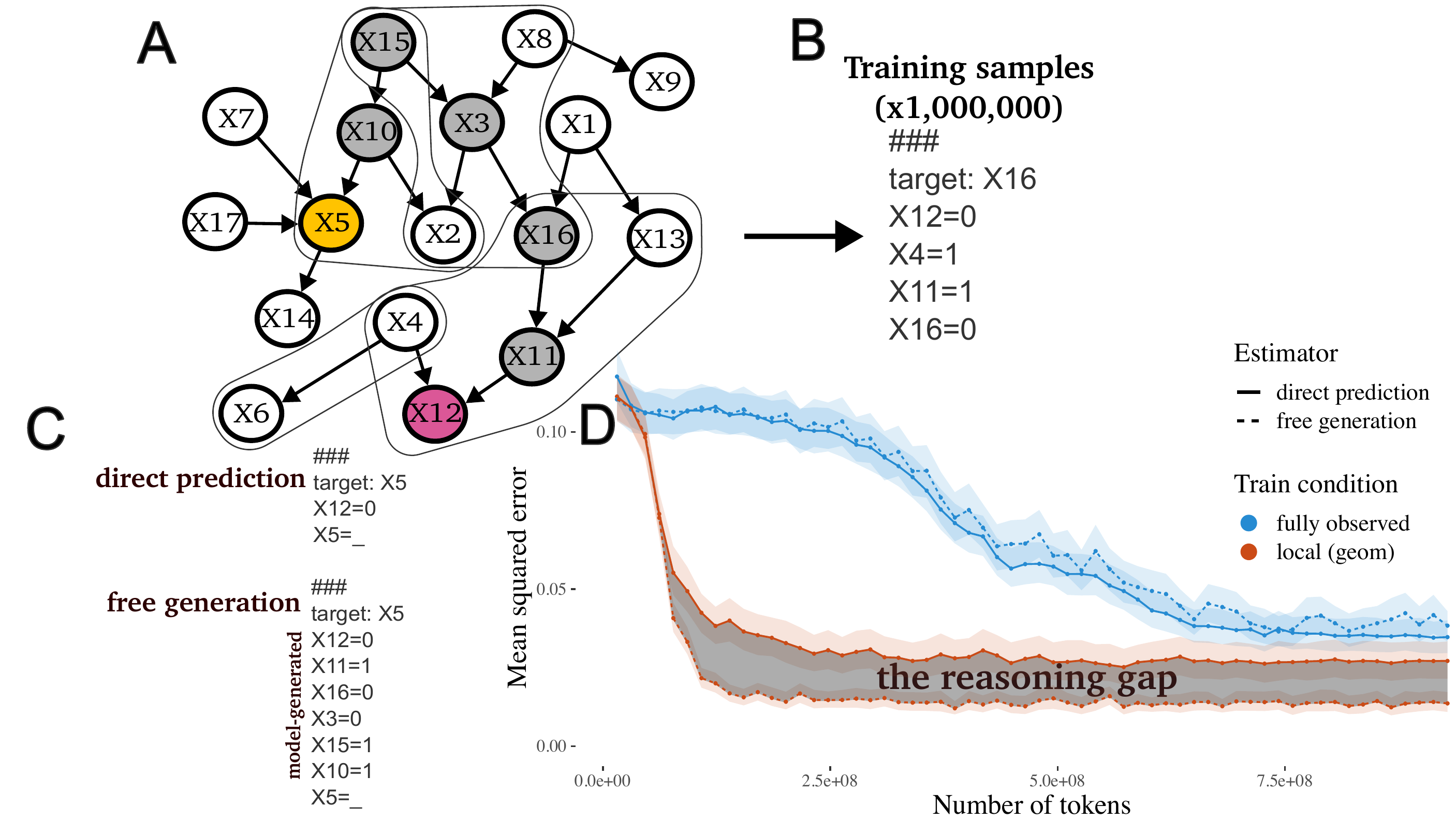}
    \caption{Overview of our training and estimation setup. A: visualization of a Bayes net. The pink variable is an example observed variable and the yellow variable is an example target variable. Gray variables are examples of useful intermediate variables for reasoning. Lines show examples of local neighborhoods from which training samples are drawn. B: format of the training samples. C: illustration of direct prediction and free generation estimators as prompts. We prompt the model to either immediately predict the target variable (direct prediction) or do so after generating intermediate variables and their values (free generation). We then compute mean squared errors between the estimated and true conditional probabilities. D: mean squared error by number of training tokens for each training condition and estimator. Ribbons indicate 95\% confidence intervals.}
    \label{fig:overview}
\end{figure}

Our hypothesis is similar to that advanced by Chan et al. \citep{chan2022data} to explain why in-context learning occurs (generalizing from examples shown in context). They show that ``burstiness,'' the tendency of instances of a class to occur close together, in the training data is important for language models to learn to infer tasks based on examples in the context window. Burstiness and locality are two different statistical properties of training data that could arise from topic structures in language models or from a first-person perspective in humans. While burstiness concerns how a class is distributed over time, locality concerns which classes co-occur together. In datasets with local structure, concepts that strongly influence each other are also seen together frequently in the training set, whereas concepts that do not are seen together less frequently.

In this work, we investigate the properties of a model's training data that make generating intermediate variables useful for inference. 
We first formalize the problem mathematically and prove that reasoning through the right intermediate variables reduces bias in a model trained on local samples from a chain. We then train autoregressive transformers from scratch on synthetic data with different structures and evaluate the models' ability to match conditional probabilities for pairs of variables that were not observed together in training. Generating values for relevant intermediate variables enables models to match conditional probabilities more accurately, but only when the training data is locally structured.
Allowing the model to select intermediate variables to reason through is similarly helpful to giving it the ideal intermediate variables to reason through. 
Our results suggest that chain-of-thought reasoning is useful for language models because 1) direct prediction is inaccurate for some inferences because the relevant variables are rarely seen together in training and 2) chain-of-thought reasoning improves estimation by incrementally chaining local statistical dependencies that are observed frequently in training.
We also find that the combination of locally structured training data and reasoning with self-generated intermediate variables yields much greater data efficiency than training on data containing all variables.

\section{Task setup}

Our theory and experiments take place in the context of conditional inference in joint distributions, as represented by Bayesian networks. In our framing, a learner sees samples from a Bayes net and needs to accurately estimate conditional probabilities. However, the learner may not see all variables together, having access only to locally-structured observations instead.

We describe our framework formally. First, we assume there is some underlying Bayes net with random variables $\{{Y}_i\}_{i=1}^{N}$ taking support on a finite set $\mathcal{X}$.
We denote the distribution defined by the Bayes net as $p_d$, the \textit{data distribution}.
Our training data is a sequence of \textit{variable indices} $i \in \{1, \ldots\, N\}$, and \textit{variable values} $v_i \in \mathcal{X}$. Variable values always follow variable indices. 

\subsection{Observation distribution}
\label{obs_structure}

We create local structure in our task via an \textit{observation distribution} $p_{obs}$, which is a distribution over subsets of the variable indices. Observation distributions take support on a set $\mathcal{Y}_\text{obs} \subseteq \mathcal{P}(\{1, \ldots, N\})$. As a consequence, our learner will only see samples from the joint distribution over certain subsets of random variables in the Bayes net.
The generative process for our sequence consists of two steps. 
First, we sample a set of variable indices $\{i_t\}_{t=1}^K$ according to $p_{\text{obs}}$. 
These indices correspond to some set of random variables $\{Y_{i_t}\}_{t=1}^K$ in the Bayes net, which appear in random order in training.
We then sample the values $\{v_{i_t}\}_{t=1}^K$ corresponding to the variable indices according to the distribution $p_d(Y_{i_1}, \ldots, Y_{i_K})$. 
Together, $p_d$ and $p_{\text{obs}}$ define a distribution $p$ on training sequences.

We are particularly interested in local observation distributions as they enable models to learn strong dependencies and reflect some of the statistical structure of human experience. In local observation distributions, the learner observes a subset of variables that are close together in the Bayes net graph. Figure~\ref{fig:overview}A shows several local neighborhoods in an example Bayes net.

\subsection{Estimators}
\label{estimators_description}

Given an autoregressive conditional probability estimation model $q$ that is trained to predict variable indices and values, we can use one of three different estimators for the conditional probabilities. The estimators differ in how a sequence of variables and values is produced before estimating the target variable. If we want to compute the conditional probability of a target variable $Y_i$ taking some value $y_i$ given that we have observed variable $Y_j$ to be $y_j$, we can use one of the following estimators.

\paragraph{Direct prediction} The direct prediction estimator is the baseline without any reasoning: the probability directly output by $q$.
\begin{align}
    \hat{q}_{D}(Y_i = y_i|Y_j=y_j) = q(Y_i = y_i|Y_j = y_j).
\end{align}
We simply use the model to directly estimate the conditional probability of the target variable.

\paragraph{Scaffolded generation} The scaffolded generation estimator represents ideal reasoning if we know the best set of steps to work through. A scaffold is an ordered set $S$ consisting of variables that were each observed together with the observed or target variable (or another scaffold variable) and collectively $d$-separate the observed variable from the target variable. In the case of a chain, the scaffold consists of all variables between $Y_i$ and $Y_j$.
Variables are ordered by their distance from the observed variable in the Bayes net, increasing. We estimate each variable given the observed variable and previously generated scaffold variables using $q$ before estimating the target probability. 
We approximately marginalize over the scaffold variables' values using $M$ Monte Carlo samples from the conditional distributions. 
\begin{align}
\label{eq:scaffolded_estimator}
\hat{q}_{S}(Y_i = y_i |Y_j = y_j) &= \frac{1}{M} \sum_{k=1}^M q(Y_i = y_i | \{Y_{s} = y_s^{(k)}\}_{s \in S}, Y_j = y_j) \\
\text{where} \, \, 
y_{s}^{(k)} &\sim q(Y_s|\{Y_t = y_t^{(k)}\}_{t \in S | t \prec s}, Y_j = y_j)
\end{align}
We will denote the distribution defined by the sampling procedure in \ref{eq:scaffolded_estimator} by $\hat{q}_{S}$.

\paragraph{Free generation} The free generation estimator tests whether trained models spontaneously generate useful intermediate variables. It is similar to scaffolded generation in that it generates intermediate variables before the target, but free generation uses the model to choose \textit{which} variables to instantiate in addition to estimating their values. We simply sample variable indices and values from $q$ until it generates the index of the target variable. At that point, we compute the probability of the target variable. We average over $M$ such samples. 

\section{Theoretical results}
In order to understand the range of situations in which reasoning might be useful, we study the conditions under which a risk-minimizing sequence model $q$ benefits from reasoning through intermediate variables.  
We analyze a simplified version of our task in which a model is trained on pairs of adjacent variables from a directed chain.
In this setting, we prove that the minimizer of a risk consisting of cross entropy with entropy regularization exhibits a \textit{reasoning gap}: the bias of direct conditional probability estimates between pairs of random variables that do not appear together is higher than indirect estimates that chain together conditional probabilities of intermediate random variables.

We assume that a risk minimizer $q$ is trained on a sequence of alternating \textit{variable indices} $i_t \in \{1, \ldots, N\}$ and \textit{variable values} $Y_i \in \mathcal{X}$ where the $Y_i$'s take support in some finite set $\mathcal{X}$. Variable values always follow variable indices. We assume that the joint distribution $p_d$ over $Y_1,Y_2, \ldots, Y_N$ factorizes as $p_d(Y_1, \ldots, Y_N)= p_d(Y_1) \prod_{i=1}^N p_d(Y_{i+1} | Y_i)$. The observation distribution $p_{\text{obs}}$ only assigns non-zero probability to adjacent variable pairs, i.e. $p_{\text{obs}}(\{i, j\}) = 0$ if $|i - j| > 1$ and $p_\text{obs}(X) = 0$ if $|X| \ne 2$. The training set consists of i.i.d. samples from $p$, which is the distribution over complete sequences of variable indices and values defined by $p_{obs}$ and $p_d$.


We show that when our data has the locality structure as described, marginalizing over intermediate random variables using the learned conditional probabilities between adjacent variables leads to lower-bias estimates than using the learned estimate directly:
\begin{theorem}
Let $\mathcal{S}$ be the space of possible sequences consisting of variable indices followed by variable values. Let $u$ be the uniform distribution over $\mathcal{S}$.
Let $H(p, q)$ denote the cross entropy between distributions $p$ and $q$.
We consider the following risk:
\begin{align}
    R(q) 
    = H(p, q) + H(u, q) 
    \label{eq:risk_main}
\end{align}
Let $q^{*} = \arg \min_{q} R(q)$ be a minimizer of the risk over all possible probability distributions. 
\label{main_theorem}
Then, for all non-adjacent random variables $Y_i$ and $Y_j$, reasoning through the intermediate variables has lower bias than direct prediction. That is, for any $y_i, y_j \in \mathcal{X}$:
\begin{align*}
    |\mathbbm{E}_{q^{*}_{\text{S}}}[\hat{q}_{S}(Y_i = y_i | Y_j = y_j)] - p_d(Y_i = y_i | Y_j = y_j)
|^2  && \\ 
< |\hat{q}_{D}(Y_i = y_i | Y_j = y_j) - p_d(Y_i = y_i | Y_j = y_j)|^2
\end{align*}
\end{theorem}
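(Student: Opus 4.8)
The plan is to first characterize the risk minimizer $q^*$ explicitly, then compute the two bias terms directly. The key observation is that the regularizer $H(u,q)$ is finite only if $q$ assigns positive probability to every sequence in $\mathcal{S}$, which forces $q^*$ to be fully supported; meanwhile $H(p,q)$ is minimized, subject to normalization constraints, by matching $q$ to $p$ as closely as the support of $p$ allows. Writing $R(q) = H(p,q) + H(u,q) = -\sum_s \big(p(s) + u(s)\big)\log q(s)$ and minimizing over the simplex (via Lagrange multipliers, normalizing within each autoregressive conditional), I expect to find that $q^*$ factorizes autoregressively with conditionals proportional to $p(\cdot) + u(\cdot)$ on the relevant sub-alphabet — i.e., a smoothed version of $p$. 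Concretely, for a variable index $j$ followed by its value and then an index $i$, the learned conditional $q^*(Y_i = y_i \mid Y_j = y_j)$ is a convex combination of the true $p_d(Y_i = y_i \mid Y_j = y_j)$ (when the pair $\{i,j\}$ has $p_{\text{obs}}$-mass, i.e. when $|i-j|=1$) and the uniform distribution on $\mathcal{X}$, with mixing weights determined by the relative cross-entropy weights. For non-adjacent pairs, $p$ contributes nothing, so $q^*(Y_i = y_i \mid Y_j = y_j)$ is \emph{exactly uniform}, $1/|\mathcal{X}|$.

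Given that characterization, the right-hand side of the theorem is immediate: $\hat{q}_D(Y_i = y_i \mid Y_j = y_j) = 1/|\mathcal{X}|$ for non-adjacent $i,j$, so the direct bias is $|1/|\mathcal{X}| - p_d(Y_i = y_i \mid Y_j = y_j)|$. For the left-hand side, I would use the chain structure: the scaffold $S$ consists of all variables strictly between $Y_i$ and $Y_j$, each adjacent pair along the chain was observed, so each factor $q^*(Y_{k+1} \mid Y_k = y_k)$ is the \emph{smoothed} adjacent conditional — a convex combination $(1-\epsilon)\, p_d(Y_{k+1}\mid Y_k) + \epsilon\, \mathrm{Unif}$ for a fixed $\epsilon \in (0,1)$ depending only on the risk weights and $|\mathcal{X}|$. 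I then need to show that the expectation $\mathbb{E}_{q^*_S}[\hat{q}_S(Y_i = y_i\mid Y_j = y_j)]$, which is the product-marginalization $\sum$ over intermediate values of these smoothed transition kernels, stays strictly closer to the true $p_d(Y_i = y_i \mid Y_j = y_j)$ than the flat $1/|\mathcal{X}|$ does. The cleanest route: show that composing smoothed kernels yields $P_{\text{chain}}^{\text{smoothed}}(Y_i \mid Y_j) = (1-\epsilon') p_d(Y_i \mid Y_j) + \epsilon' r$ where $r$ is some residual distribution and $\epsilon' < 1$ strictly (since $1-\epsilon'$ is at least $(1-\epsilon)^{L}$ times a positive factor, $L$ = path length, which is $>0$); hence the chained estimate is a nontrivial convex combination of the truth and something else, while the direct estimate is $0$-weight on the truth. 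A convex combination $(1-\epsilon')p_d + \epsilon' r$ has squared bias $\epsilon'^2 |p_d - r|^2$...

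Here is the subtlety, and I expect it to be the \textbf{main obstacle}: a convex combination toward an arbitrary $r$ is not automatically closer to $p_d$ than the uniform distribution is — it depends on where $r$ sits. So I cannot argue purely from "contraction toward the truth"; I need to identify $r$ and control it. Two ways to handle this: (i) argue that the Monte Carlo expectation, by linearity, equals $q^*$ evaluated at the chained construction, and unfold the smoothing recursion to show the residual $r$ is itself a mixture of uniform and partially-chained true conditionals, all of which are "between" uniform and $p_d$ in a suitable sense; or (ii) more robustly, prove the strict inequality by a direct induction on path length $L$: for $L=1$ (adjacent, the base reasoning step) the scaffold is empty and $\hat q_S = \hat q_D = q^*(\cdot\mid\cdot)$ equals the smoothed conditional, which is strictly closer to $p_d$ than uniform as long as... wait, that case is excluded since the theorem assumes non-adjacency, so $L \ge 2$; for the inductive step, split the scaffold into the first intermediate variable $Y_{j\pm1}$ and the rest, write $\hat q_S$ as an average over that first variable's sampled value of a length-$(L-1)$ chained estimate composed with one smoothed true transition, and apply the inductive hypothesis plus convexity of $t \mapsto |t - c|^2$ together with the fact that $q^*(Y_{j\pm1}\mid Y_j)$ is a genuine smoothing of the \emph{true} adjacent conditional (so the averaging is "almost" the true marginalization). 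I would also need to confirm that $q^*_S$, the sampling distribution over scaffold values induced by $q^*$, uses exactly these smoothed true conditionals, so that the expectation telescopes correctly. The remaining steps — verifying the Lagrangian solution is a genuine minimizer (strict convexity of $R$ in $q$ on the relevant support), checking the edge cases where $p_d(Y_{k+1}\mid Y_k)$ is deterministic, and tracking that all mixing weights are strictly in $(0,1)$ so the inequality is strict — are routine. I would finish by assembling: direct bias $=|p_d(y_i\mid y_j) - 1/|\mathcal{X}||$, chained bias $<$ that, strictly, for every $y_i,y_j$.
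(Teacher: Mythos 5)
Your characterization of $q^{*}$ matches the paper's: via Lagrange multipliers on the simplex you get a convex combination of $p_d$ and uniform on adjacent conditionals, and exactly uniform on non-adjacent ones (this is the paper's Proposition~\ref{lemma_mixture} and Theorem~\ref{main_theorem_app}). Your identification of the ``main obstacle'' is also exactly right, and you correctly see that contraction toward the truth is not automatic: composing the smoothed adjacent kernels gives $\mathbb{E}[\hat{q}^{*}_S] = (1-\lambda')p_d(Y_i\mid Y_j) + \lambda' r$ for some residual $r$, and if $r$ sits on the far side of $p_d$ from the uniform distribution the squared bias of the chained estimate can exceed $|1/|\mathcal{X}| - p_d|^2$. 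Your proposal stops at acknowledging this difficulty without closing it, and neither of your two suggested escapes is carried through.

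The paper's resolution, which you do not find, is an \emph{additional assumption} that does not appear in the main-text statement: the true conditionals are assumed doubly stochastic, $\sum_{y_i} p(Y_i=y_i\mid Y_j=y_j)=1=\sum_{y_j} p(Y_i=y_i\mid Y_j=y_j)$ (see the sentence preceding Theorem~\ref{ref:estimator_bias_thm} in Appendix~\ref{theory_appendix}). Under that assumption, when the uniform component of one smoothed kernel is marginalized against the true transition kernel of the next, the cross term $\sum_{y_k}\frac{1}{|\mathcal{X}|}p(Y_{k+1}=y_{k+1}\mid Y_k=y_k)$ collapses to $\frac{1}{|\mathcal{X}|}$ again; the residual $r$ you worry about is then \emph{exactly} the uniform distribution at every step of the induction. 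With $r=\mathrm{Unif}$, the chained estimate is $(1-\lambda')\,p_d + \lambda'/|\mathcal{X}|$ with $\lambda'\in(0,1)$, its squared bias is $(1-\lambda')^2\,|1/|\mathcal{X}|-p_d|^2$, and the strict inequality follows immediately. Your induction-on-path-length sketch is structurally the same as the paper's argument; the missing idea is simply recognizing that some structural condition on $p_d$ (doubly stochastic, or the alternative assumption the paper mentions, namely that $q^{*}$ interpolates toward marginals rather than uniform) is what pins down $r$. Without such an assumption your concern is real and a counterexample exists: in the binary case with $\sum_{y_2}p(Y_3=y_3\mid Y_2=y_2)$ far from $1$, a suitable choice of mixing weights makes the chained bias exceed the direct bias. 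So this is a genuine gap, though a well-diagnosed one: you found the crack, you just did not find the plaster.
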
 
Here, the expectation is over the randomness in the Monte Carlo samples of intermediate variables.

\begin{proof}[Proof sketch]
We characterize $q^{*}$ in terms of $p_d$ and $u$ by analyzing the risk (Eq~\ref{eq:risk_main}), which decomposes into a sum of cross-entropy terms, either between $q(\cdot)$ and $p(\cdot)$ or between $q(\cdot)$ and $u$. 

We consider two cases. 
For any adjacent pairs ${Y}_{i}$ and ${Y}_{i+1}$, $q^{*}(Y_{i+1}|Y_i)$ will interpolate between the true conditional probability $p_d(Y_{i+1}|Y_i)$ and the uniform distribution. 
This is because the risk will contain a term of the form $-\mathbbm{E}_{p_d(Y_{i+1} | Y_i)}[\log q(Y_{i+1}|Y_i)]$ and a term of the form $-\mathbbm{E}_{u}[\log q(Y_{i+1} | Y_i)]$. 
The minimizer of the sum of these terms is a mixture between $p_d(Y_{i+1}|Y_i)$ and the uniform distribution.
On the other hand, for non-adjacent pairs $Y_i$ and $Y_j$, $q^{*}(Y_i|Y_j) = \frac{1}{|\mathcal{X}|}$.
As a consequence of the observation distribution $p_{\text{obs}}$, $Y_i$ and $Y_j$ will never appear together.
There will only be a term of the form  $-\mathbbm{E}_{u}[\log q(Y_i | Y_j)]$ in the risk. 
Therefore, minimizing the risk for $Y_i$ and $Y_j$ is equivalent to minimizing the entropy regularization term since the risk does not penalize $q^{*}(Y_i | Y_j)$ for not matching $p_d(Y_i|Y_j)$. 

Given $q^{*}$, we can calculate the bias of the scaffolded estimator $\hat{q}_{S}$, where we approximately marginalize out the intermediate random variables by sampling from the conditional probability distributions $q^{*}(Y_{i+1} | Y_i)$.
Since these conditional distributions are mixture distributions, we can also show that the scaffolded estimator is a mixture between $p(Y_{i} | Y_j)$ and $u$ with mixture weight $\lambda \in (0, 1)$.
Therefore, its bias can be expressed as $(1-\lambda)^2|p(Y_{i} | Y_j) - \frac{1}{|\mathcal{X}|}|$. 
The second term in the product is exactly the bias of the direct estimator and  
$(1-\lambda)^2 < 1$. 
The desired inequality immediately follows.
Intuitively, the bias of the scaffolded estimator is lower because it leverages the dependency structure of the data, chaining together local conditional probabilities of variables that appeared together in training to estimate conditional probabilities for unseen pairs.
\end{proof}
In practice, we find that density estimators tend to predict the marginal distributions for held-out pairs, \ie $q^{*}(X_i | X_j) \approx p(X_i)$. 
If we instead assume that the risk minimizer is a mixture between the true conditional distributions and the marginal distributions, we can relax some assumptions we make about the underlying data distribution. 
For further discussion, full details of the theorem, and proofs see  Appendix~\ref{theory_appendix}.
In the next section, we explore whether the reasoning gap occurs in practice for transformer language models in situations with more complex dependency and observation structures.

\section{Experimental methods}

We test the hypothesis that locally structured training data leads to a reasoning gap by training a language model (i.e.,~an autoregressive density estimator with a transformer architecture) and testing conditional inference accuracy. We manipulate the structure of the training data and measure the effectiveness of different estimators. The key question is whether there are training conditions under which scaffolded or free generation outperforms direct prediction. Code and data are available at 
\url{https://github.com/benpry/why-think-step-by-step}.

As in our theoretical analysis, we are interested in understanding the bias of different estimators. In practice, we evaluate our estimators using Mean Squared Error (MSE), averaging over $10$ Monte Carlo samples for estimators that include intermediate variables. Averaging over samples reduces variance, meaning that the remaining estimation error is primarily driven by bias.

\subsection{Training data}

The first step in generating our training data is to create Bayes nets. We generate a random topology for the networks, creating $100$ variables and incrementally adding $100$ random edges between pairs of variables.
After generating a topology, we create conditional probability tables for each variable. 
We design the conditional probability tables and select among generated Bayes nets to create non-adjacent pairs of variables with high mutual information.
The probability of a variable being $1$ for each setting of its parents is randomly sampled from Beta$(\frac{1}{5}, \frac{1}{5})$, which favors moderately strong dependencies. If we did not favor strong dependencies, conditional probabilities would be very close to marginal probabilities for non-adjacent pairs of variables and a language model could come very close to matching true conditional probabilities without learning the relationships between variables. Strong dependencies are therefore an important precondition to the effectiveness of reasoning.
We generate $100$ Bayes nets according to this process, then compute the mutual information between each pair of variables. We select the $50$ pairs of non-adjacent variables that have the highest mutual information, then randomly sample $25$ of those pairs to hold out from the training set. 
Finally, we select the 10 Bayes nets, out of the initial 100, for which the 25 held-out pairs of variables have the highest mean mutual information. Pseudocode for Bayes net generation is shown in Algorithm~\ref{alg:net-generation} of Appendix~\ref{data_gen_pseudocode}.

For each of the 10 selected Bayes nets, we generate a training set consisting of 1 million samples formatted as strings. The strings consist of the name of each variable followed by its value; variables are ordered randomly. Variable names are the letter `X' followed by a number from 0 to 99. For example, if the variable `X42' has value 1, the string would include the line ``X42=1''. We mention the name of the last variable in the sample at the beginning of the string and mark it with the word ``target.'' We also include `\#\#\#' before the sample to indicate where it begins. A schematic example of a formatted sample string is shown in Figure~\ref{fig:overview}B and a full example is shown in Appendix~\ref{example_sample}. 

Each sample includes only a subset of all the variables in the Bayes net, chosen according to the \textit{observation distribution}, which is a distribution over subsets of variables. Pseudocode for selecting a subset of variables from an observation distribution is shown in Algorithm~\ref{alg:observation-distribution} in Appendix~\ref{data_gen_pseudocode}. At a high level, observation distributions have three key properties:

\paragraph{Locality} Observed samples contain only variables from a local neighborhood, consisting of a central variable along with all variables of distance at most $k$ from it. 
To sample from the observation distribution, we sample the central variable uniformly randomly and then sample $k$ from some distribution that controls the local neighborhood size. 
In our experiments, we draw $k$ either from a geometric distribution with a parameter of $0.5$ or a Zipfian distribution with a parameter of $2$.

\paragraph{Variable dropout} Even within a local subset of the world, we may not see everything at once. Certain variables may be missing or go unnoticed. We formalize this intuition with \textit{variable dropout}. With some probability ($0.2$ in our experiments), variables are dropped from a local neighborhood and the learner does not see them. Variable dropout may also help a model generalize to pairs of variables that were unseen in training as more subsets of variables appear in the training set.

\paragraph{Held-out pairs} Finally, some pairs of variables are held out across all training data. 
If a local neighborhood, after variable dropout, would include a pair of variables we decided to hold out, we randomly remove one of the two variables in the pair from the sample. 
We use the ability to match conditional probabilities for held-out pairs of variables, measured via mean squared error, as our main performance metric. 

We also create two {\bf control conditions} to compare against local observation distributions. As one control, we consider a training dataset in which the values of variables are sampled from one Bayes net, but the local neighborhoods are constructed with respect to the structure of a different Bayes net. This condition still has a local co-occurrence structure, but the co-occurrences do not reflect which variables influence each other.
As another control, we use a \textit{fully observed} condition where each sample contains almost all of the variables in the Bayes net. One of the two variables in each held-out pair is randomly dropped, but all other variables are included. These controls enable us to test whether local structure in the training data drives the value of reasoning.

\subsection{Estimation}

Each of the estimators in Section~\ref{estimators_description} is implemented via sampling from and scoring with the language model. For instance, in free generation, the model is prompted with the target variable name and initial observation, then sampled from until the target variable name is generated. The probabilities for possible values of the target variable are then examined. Figure~\ref{fig:overview}C illustrates how we implement direct prediction and free generation as prompts and Appendix~\ref{example_prompts} contains examples of each estimator. For estimators that rely on Monte Carlo estimation, we use $10$ samples.

We also introduce \textbf{negative scaffolded generation} as a control estimator that generates irrelevant intermediate variables. For each pair of variables, we select a random set of variables equal in size to the scaffold, but which does not include any of the scaffold variables. We prompt the language model to generate values for the negative scaffolds in the same way as in scaffolded generation.

\subsection{Model architecture}

We use a smaller version of the GPT-2 autoregressive transformer architecture \citep{radford2019language}, as implemented in the HuggingFace transformers library \citep{wolf2020transformers}. Our model has 512-dimensional embeddings, 10 layers, and 8 attention heads. Data is tokenized via a Byte Pair Encoding tokenizer fit to data in our format \citep{sennrich2016neural}. We trained this architecture with randomly initialized weights for $300,000$ gradient steps on batches containing $3,072$ tokens each, for a total of $921,600,000$ tokens of training. We trained models using the Adam optimizer \citep{kingma2015adam}. Each model's training set consisted of $1,000,000$ samples from a single Bayes net. The models achieved near-perfect performance on the training task. Further training details are provided in Appendix~\ref{training_details}. We also compare multiple different architectures in Appendix~\ref{architecture-comparison}, demonstrating that our results are robust across them.

\section{Results}

\begin{figure}
    \centering
    \includegraphics[width=0.9\linewidth]{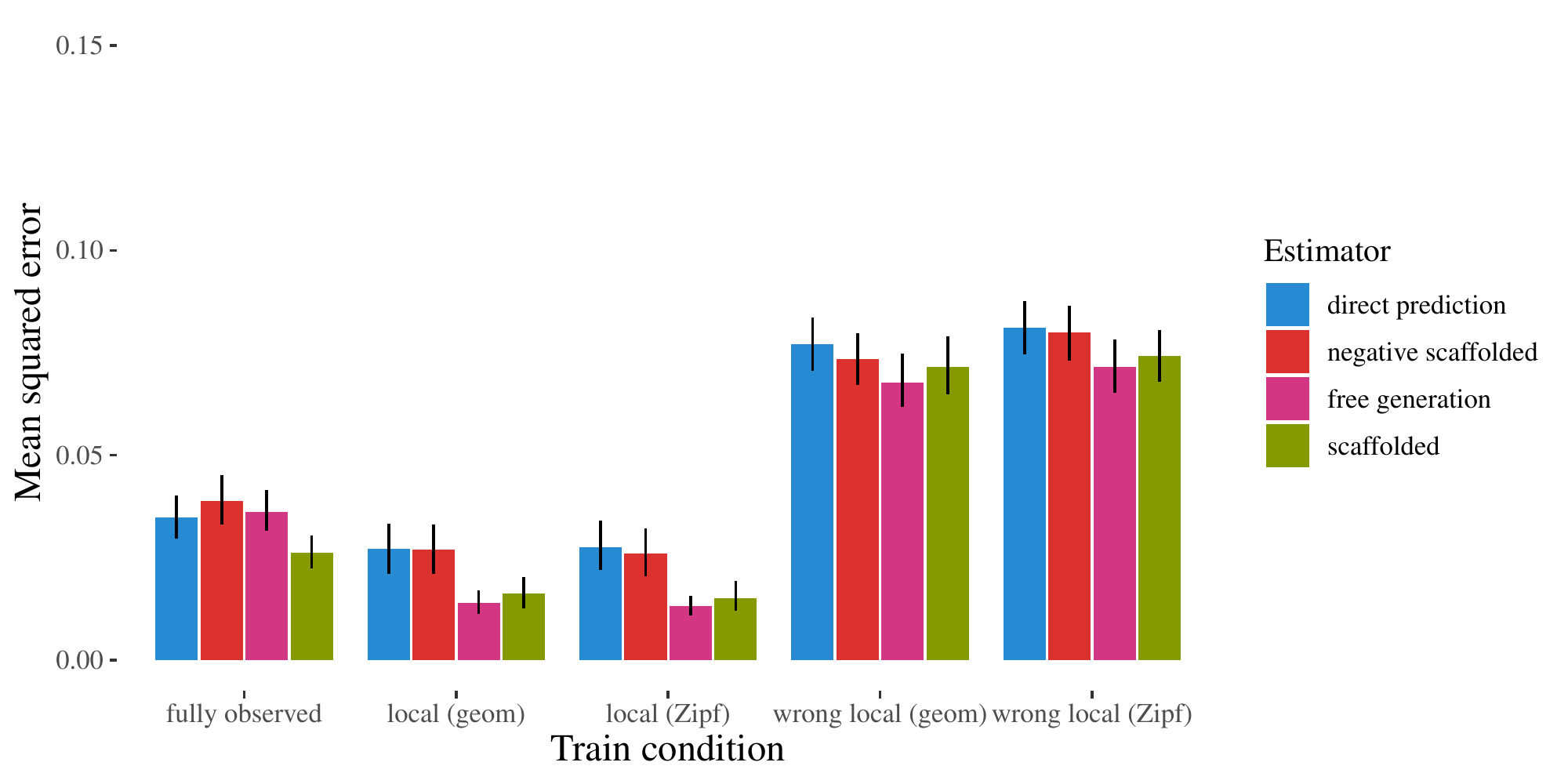}
    \caption{Mean squared error between estimated and true conditional probabilities for held-out pairs with high mutual information. Error bars denote 95\% confidence intervals. Both free and scaffolded generation significantly outperform direct prediction when the training data is locally structured.}
    \label{fig:true_mses}
\end{figure}

Each held-out pair gives four data points: if we hold out the pair $X1, X2$ we can estimate $p(X1|X2=0), p(X1|X2=1), p(X2|X1=0),$ and $p(X2|X1=1)$. Pooling across the $10$ Bayes nets we trained models on gives us $1,000$ data points per training condition and estimator. Figure~\ref{fig:overview}D shows our main result: free generation performs better (lower MSE) than direct prediction and performs well with less training when the data is structured locally. The MSEs for all estimators and training conditions after 921.6 million tokens of training are shown in Figure~\ref{fig:true_mses}.

Appendix~\ref{nsamples} shows how MSE varies with the number of Monte Carlo samples taken for each estimator. Estimation error is high for free and scaffolded generation for low numbers of samples due to high variance, but quickly decreases with the number of samples. Re-sampling the intermediate variables is akin to self-consistency prompting methods which involve re-sampling the chain of thought and taking the majority or average answer \cite{wang2022self}.

\subsection{When reasoning helps}

We find a benefit to reasoning when the observation distribution has the correct locality structure. 
When the training data is structured locally with respect to strong dependencies, both scaffolded and free generation perform significantly better than direct prediction. We call this difference the \emph{reasoning gap}. Scaffolded and free generation also perform significantly better than negative scaffolded generation, indicating that relevant intermediate variables help in predicting the target variable, but irrelevant intermediate variables do not. Furthermore, the success of free generation demonstrates that a model trained on local subsets of a Bayes net can self-generate helpful intermediate variables. 


\begin{figure}
    \centering
    \includegraphics[width=0.8\linewidth]{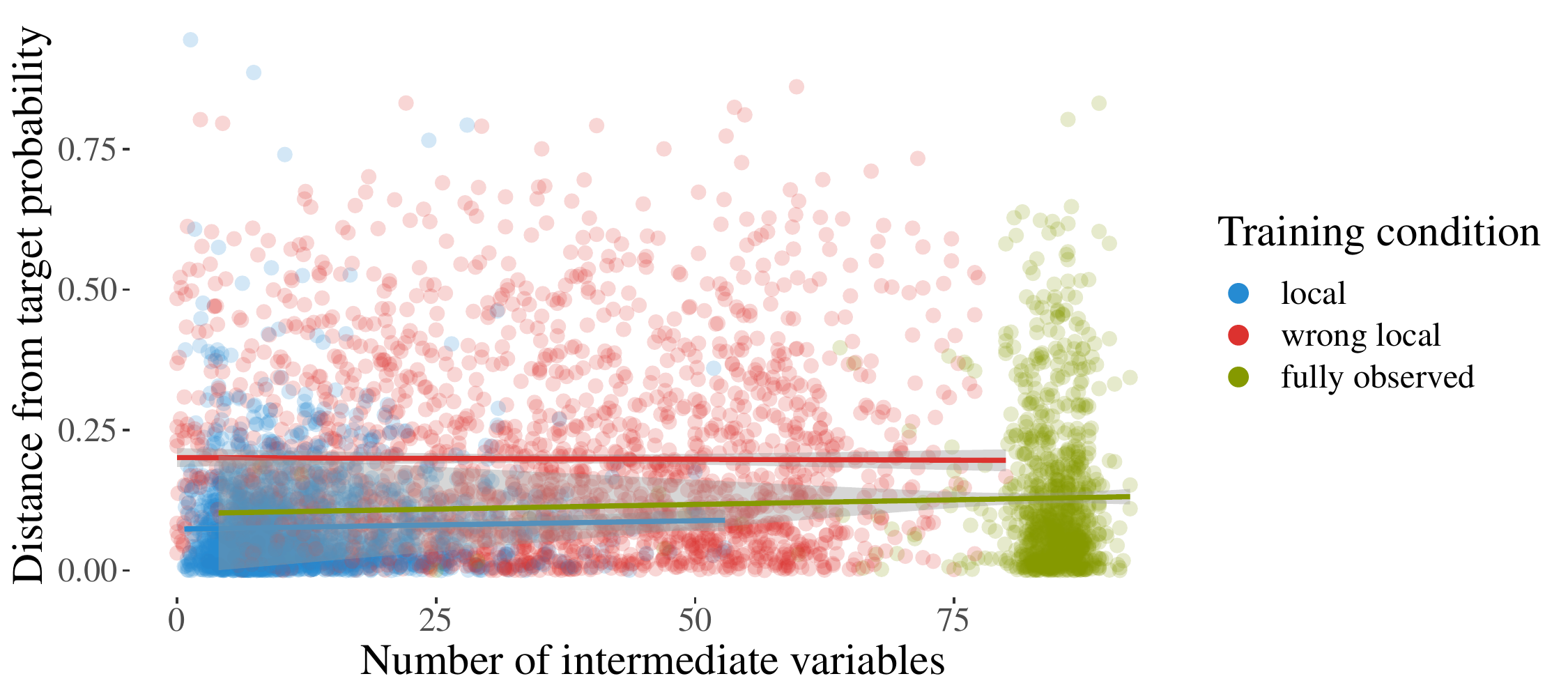}
    \caption{Number of intermediate variables generated in free generation vs. absolute distance between true and estimated conditional probabilities for each training condition. Lower dots indicate more accurate estimation, while dots further to the right indicate more intermediate variables generated.}
    \label{fig:intermediate_vs_dist}
\end{figure}

\begin{table}
    \caption{Mean squared errors with 95\% confidence intervals between direct prediction probabilities and either true or marginal probabilities for non-held-out pairs with high mutual information. In the local training conditions, direct prediction is close to perfect at matching conditional probabilities.}
    \resizebox{\linewidth}{!}{
    \begin{tabular}{ccccccc}
    \toprule
    & fully observed & local (geom) & local (Zipf) & wrong local (geom) & wrong local (Zipf) \\ \midrule
    true & .013 [.010, .016] & .004 [.003, .005] & .003 [.003, .004] & .052 [.045, .059] & .056 [.049, .063] \\
    marginal & .080 [.073, .086] & .114 [.106, .121] & .112 [.104, .121] & .047 [.041, .052] & .033 [.029, .038] \\ \bottomrule
    \end{tabular}}
    \label{tab:held_in_corrs}
\end{table}

In free generation, the model generates variables until it produces the target variable. This results in reasoning traces of varying lengths, including many long traces. One might expect long traces to distract the model, but this did not occur in practice: the length of the reasoning trace does not relate strongly to accuracy within training conditions, as is shown in Figure~\ref{fig:intermediate_vs_dist}. However, free generation does not outperform direct prediction substantially when the training data is fully observed.

It is also of note that training on locally structured data reduces the average length of reasoning traces (Fig.~\ref{fig:intermediate_vs_dist}).
In local training conditions, free generation usually produces a set of variables that $d$-separates the observed variable from the target variable -- 70\% of the time in the local neighborhood conditions. In contrast, the training conditions with the wrong locality structure only lead to $d$-separating reasoning traces 34\% of the time. The fully observed training condition also usually creates $d$-separating reasoning traces (69\%), likely because it generates a large number of variables.
$d$-separation is important because it ensures that the intermediate variables capture all the influence that the observed variable exerts on the target variable.
These results suggest that training on local clusters of variables is valuable in part because it makes it easy for an autoregressive model to naturally output useful intermediate reasoning steps. Still, chain-of-thought reasoning can be helpful even if the reasoning trace does not completely $d$-separate the two variables. In the local training conditions, the model does not generate a $d$-separating trace 30\% of the time, yet there is virtually no performance difference between free and scaffolded generation.

\subsection{Data complexity and reasoning}

The combination of locally structured training data and step-by-step reasoning can also lead to accurate conditional probability estimates with much less training than would be required if a model were trained on fully observed data. To demonstrate this difference, we run both direct prediction and free generation on checkpoints from our models that are saved after every $5000$ gradient steps ($15.3$ million tokens) of training. Part D of Figure~\ref{fig:overview} shows the results of this analysis.

When data is fully observed, direct prediction and free generation performances improve slowly but consistently. There is no reasoning gap, as both estimators perform almost identically. When the model is trained on data from geometrically sized local neighborhoods, direct prediction performance improves quickly, then plateaus. There is a substantial reasoning gap where free generation outperforms direct prediction. Free generation matches the true conditional probabilities very closely after about 120 million training tokens, achieving the best performance of any combination of training condition and estimator. Still, a sufficiently large transformer can eventually memorize the relationships in its training corpus with enough training. We evaluate the data complexity of direct learning by training a language model of the same architecture on fully observed data without holding out any pairs. Direct prediction in this case takes over 3 times as much training to achieve the same performance as local training with free generation, despite the model seeing the relevant pairs in training (see Appendix~\ref{truly_fully_observed} for detailed results). Training language models on datasets consisting of local neighborhoods with strong dependencies and performing chain-of-thought reasoning at inference time can therefore be more data-efficient than training on more complete datasets.

\subsection{When reasoning is unnecessary}

Table~\ref{tab:held_in_corrs} shows MSEs between direct prediction estimates and true probabilities for the 25 high-mutual-information variable pairs that were \textit{not} held out from the training set. In the local training conditions (and to a lesser extent fully observed), direct prediction matches the true conditional probabilities almost perfectly. This result indicates that our language models learn to match conditional probabilities directly when the observed and target variables co-occur in the training distribution. Observing these variables together frequently obviates the need for step-by-step reasoning.

\begin{table}
    \centering
    \caption{Mean squared errors between estimated conditional probabilities and marginal probabilities of target variables with 95\% confidence intervals. Lower values indicate closer matches.}
    \resizebox{\linewidth}{!}{\begin{tabular}{ccccccc}
    \toprule
    & fully observed & local (geom) & local (Zipf) & wrong local (geom) & wrong local (Zipf) \\ \midrule
    direct prediction & .086 [.079, .093] & .115 [.108, .124] & .116 [.107, .124] & .029 [.025, .033] & .022 [.019, .025] \\
    negative scaffolded & .089 [.082, .096] & .116 [.108, .124] & .115 [.108, .123] & .036 [.031, .040] & .026 [.022, .030] \\
    free generation & .123 [.115, .132] & .124 [.117, .132] & .127 [.120, .135] & .049 [.044, .055] & .042 [.037, .048] \\
    scaffolded & .110 [.102, .117] & .134 [.126, .142] & .134 [.126, .142] & .051 [.046, .057] & .043 [.038, .049] \\ \bottomrule
    \end{tabular}}
    \label{tab:all_marginal_corrs}
\end{table}

\subsection{When reasoning fails}

We test the hypothesis that language models match the \textit{marginal} probabilities of the target variables when they fail to predict the true conditional probabilities by comparing models' estimated conditional probabilities against the true marginal probabilities. The MSEs between the estimated conditional probabilities and marginal probabilities of the relevant target variables are shown in Table~\ref{tab:all_marginal_corrs}. We can see the opposite of the trend for true probabilities: the worse a training condition does at matching the true conditional probability, the better it matches the marginal. The language models trained on data with the wrong locality structure generated estimates that were particularly close to the marginal probabilities.
When the variables that co-occur with each other frequently are not local in the Bayes net, they often have very little influence on each other. This means that the joint distribution over co-occurring variables is usually very close to the product of the marginal probabilities, i.e. $P(X_1, X_2, X_3) \approx P(X_1)P(X_2)P(X_3)$ for non-local $X_1, X_2, X_3$. Without the ability to estimate conditional probabilities accurately, there are no reliable `steps' for step-by-step reasoning to use.

\section{Discussion}
Our theoretical and experimental results demonstrate that estimating conditional probabilities by reasoning through intermediate variables can outperform direct prediction when the training data has local structure. When the training data includes all the variables, free generation does not lead to better performance than direct prediction. When the training data has the wrong locality structure, the models do not reliably learn conditional probabilities that can be chained together. 

Our results provide a minimal case in which chain-of-thought reasoning is helpful and suggest conditions under which it is likely to be helpful in more naturalistic settings: we can expect chain-of-thought reasoning to help when a model is tasked with making inferences that span different topics or concepts that do not co-occur often in its training data, but can be connected through topics or concepts that do. Re-sampling the chain of thought multiple times might be especially valuable in domains with moderately strong dependencies. In these settings, introducing intermediate steps increases the variance of estimation and re-sampling multiple times reduces it. We have also shown that chain-of-thought reasoning enables greater data efficiency: a model trained on locally structured data with chain-of-thought reasoning at inference time can accurately match conditional probabilities with less training than a model trained on all the data. This result can provide insight into why humans are more data-efficient than language models. Since humans experience the world from a first-person perspective, the information we encounter is structured in small clusters of entities that are tightly coupled with each other. This idea could also be relevant to data curation for the training of language models. Constructing datasets with tightly correlated observation neighborhoods that collectively cover the space of relevant concepts may amplify language models' ability to perform chain-of-thought reasoning while reducing their data needs.

Future work should explore the structure of the observation distribution for human learners to understand how the information we observe facilitates reasoning. Using modern language models as toy models for the study of reasoning is a promising direction for cognitive science to address long-standing problems in reasoning and problem-solving, such as the value of thought experiments.

The role of abstraction in reasoning is another important direction for future work. Humans often use abstract principles to reason rather than exclusively considering possible concrete states of the world (settings of observable variables). Likewise, the language corpora that language models are trained on contain abstract concepts expressed in natural language. Understanding how abstractions can arise and the effects they have on reasoning would be helpful in connecting our setup to more naturalistic reasoning contexts. The possibility of taking abstract steps makes the problem of choosing useful reasoning steps much harder, so we plan to explore questions of how language models might learn to reason more effectively through fine-tuning, in-context learning, and reinforcement learning.

\paragraph{Limitations} Of the many forms of chain-of-thought prompting that exist in the literature, our findings are most relevant to zero-shot prompting \citep[e.g.][]{kojima2022large} where a model produces steps on the way to an answer without any other examples of reasoning traces in its context window. It is less applicable to methods that involve giving the model examples of reasoning \citep[e.g.][]{wei2022chain}. Our results are also in the context of simple propositional worlds. Reasoning in rich, structured worlds may require more expressive languages with which to reason and specify hypotheticals. Our theoretical results do not explain the phenomenon observed in practice that transformers revert to the marginal distributions for held-out pairs. 
We leave it to future work to explore how the inductive biases of transformers might induce this behavior on unseen data.

\begin{ack}
We thank members of the Computation and Cogniton Lab, the Causality in Cognition Lab, and the Stanford NLP Group for feedback on earlier versions of this work. In particular, we thank Tobias Gerstenberg for advice on the importance of strong dependencies for reasoning and Eric Zelikman for feedback on the design of our figures.
This work is supported by an NSF Expeditions Grant, Award Number (FAIN) 1918771.
\end{ack}

\bibliography{main}
\bibliographystyle{ieeetr}

\appendix

\section{Theoretical analysis} 
\label{theory_appendix}
In this section, we theoretically analyze chain-of-thought reasoning when the underlying data distribution can be represented as a directed chain. 
We prove that any minimizer of a risk consisting of a cross-entropy term and an entropy regularization term will exhibit a ``reasoning gap'' when the data has a particular locality structure. 

\subsection{Problem Setup} 
We assume that we observe a sequence of 
\textit{variable indices} $i_t \in \{1, \ldots, N\}$ and \textit{variable values} $v_t \in \mathcal{X}$ for some finite set $\mathcal{X}$.
In the sequence, variable values always follow variable indices.
Since our results concern estimators that are only ever conditioned on a single variable, it suffices to consider sequences of length two. 
However, our setup can be interpreted as assuming we observe independent local neighborhoods of size 2 with token separators.

\paragraph{Factorization of data distribution}
We assume that the joint distribution $p_d$ over $Y_1,Y_2, \ldots, Y_N$ factorizes as 
\begin{align*}
    p_d(Y_1, \ldots Y_N)= p_d(Y_1) \prod_{j=1}^N P_d(Y_{j+1} | Y_j)
\label{ref:graphical_model}
\end{align*}
and that $Y_1, Y_2, \ldots, Y_N$ take support in some finite set $\mathcal{X}$. 
Note that the variable indices index particular random variables in $p_d$.

\paragraph{Local observation distribution}
We will consider a setting where only adjacent variables in the graphical model can appear adjacent in the sequence.
More formally, let $\mathcal{Y}$ refer to the set of all variable identifiers. 
Let $\mathcal{Y}_{\text{obs}}$ be the set of all allowed pairs of variable indices. 
Then, $\mathcal{Y}_\text{obs} \subseteq \mathcal{P}(\{1, \ldots, N\})$ and $\mathcal{Y}_{\text{obs}}$ consists of pairs of the form $(i, i+1)$.
We will define the distribution over these valid subsets of variable indices as $p_{\text{obs}}$; in particular, $p_{\text{obs}}$ takes support only on $\mathcal{Y}_{\text{obs}}$ and assigns equal probability to all of its pairs.
The two distributions $p_d$ and $p_{\text{obs}}$ define a distribution $p$ over the sequence of variable indices and variable values.

That is, for any non-adjacent pairs $Y_i$ and $Y_j$ and for any variable values $y_i$ and $y_j$, 
\begin{align}
p(i_1 = i, v_1 = y_i, i_2 = j, v_1 = y_j) = 0    
\end{align}
For adjacent pairs $Y_{i+1}$ and $Y_i$ and for variable values
$y_{i+1}$ and $y_i$, 
\begin{align}
p(i_1 = i, v_1 = y_i, i_2 = i+1, v_1 = y_{i+1}) \propto p_d(Y_i=y_i, Y_j=y_j)    
\end{align}

Intuitively, this property will imply that the risk minimizer only gets to see a small subset of local statistical dependencies in the graphical model.
This key property of the observation distribution will induce a reasoning gap.

\subsection{Preliminaries}
We begin with a proposition that will be useful in the main theorem.
The proposition concerns the minimizer of the sum of two cross-entropy terms.
\begin{prop}
Let $\alpha_1 \geq 0, \alpha_2 \geq 0$.
Let $R(q) = \alpha_1 \mathbbm{E}_{p_{1}(x)}[-\log q(x)] + \alpha_2 \mathbbm{E}_{p_{2}(x)}[-\log q(x)]$. Then $q^{*} = \arg\min R(q) = \frac{\alpha_1}{\alpha_1 + \alpha_2} p_1(x) + \frac{\alpha_2}{\alpha_1 + \alpha_2} p_2(x)$
\label{lemma_mixture}
\end{prop}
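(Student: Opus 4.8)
The plan is to recognize that the risk $R(q)$ is, up to a positive multiplicative constant, a single cross-entropy against the mixture distribution, and then to conclude by Gibbs' inequality (non-negativity of the KL divergence).

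First I would set $\alpha = \alpha_1 + \alpha_2$ and, assuming $\alpha > 0$, define the mixture $\bar{p}(x) = \frac{\alpha_1}{\alpha} p_1(x) + \frac{\alpha_2}{\alpha} p_2(x)$, which is itself a probability distribution. By linearity of expectation,
\begin{align*}
R(q) = \mathbbm{E}_{x \sim \alpha_1 p_1 + \alpha_2 p_2}[-\log q(x)] = \alpha\, \mathbbm{E}_{x \sim \bar{p}}[-\log q(x)] = \alpha\, H(\bar{p}, q).
\end{align*}
Next I would decompose $H(\bar{p}, q) = H(\bar{p}) + D_{\mathrm{KL}}(\bar{p} \,\|\, q)$, where $H(\bar{p})$ is a constant not depending on $q$. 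Since $D_{\mathrm{KL}}(\bar{p} \,\|\, q) \ge 0$ with equality if and only if $q = \bar{p}$, and $\alpha > 0$, the risk is minimized precisely at $q^{*} = \bar{p} = \frac{\alpha_1}{\alpha_1+\alpha_2} p_1 + \frac{\alpha_2}{\alpha_1+\alpha_2} p_2$, which is the claimed form.

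As an alternative suited to the finite-$\mathcal{X}$ setting of the excerpt, one can argue directly with a Lagrange multiplier: minimizing $\sum_x -(\alpha_1 p_1(x) + \alpha_2 p_2(x)) \log q(x)$ subject to $\sum_x q(x) = 1$ gives the stationarity condition $q(x) \propto \alpha_1 p_1(x) + \alpha_2 p_2(x)$, and normalization forces the normalizing constant to be $\alpha_1 + \alpha_2$; convexity of $t \mapsto -\log t$ then certifies that this stationary point is the global minimizer. Either route is short.

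The only thing requiring care — the ``obstacle,'' such as it is — is domain bookkeeping: $H(\bar{p}, q) = +\infty$ whenever $q$ fails to dominate $\bar{p}$, so the minimizer is attained in the regime where $\mathrm{supp}(\bar{p}) \subseteq \mathrm{supp}(q)$ and the KL term is finite; and the degenerate case $\alpha_1 = \alpha_2 = 0$ must be handled separately, where $R \equiv 0$ and the statement is vacuous. Beyond these remarks, the proposition is an immediate consequence of Gibbs' inequality.
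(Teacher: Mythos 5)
Your primary argument is correct but takes a genuinely different route from the paper. The paper sets up a Lagrangian for the constrained problem $\min_q -\sum_x (\alpha_1 p_1(x) + \alpha_2 p_2(x))\log q(x)$ subject to $\sum_x q(x) = 1$, solves the first-order stationarity conditions $-\alpha_i p_i(x)/q(x) + \lambda_0 = 0$, and normalizes to find $\lambda_0 = \alpha_1 + \alpha_2$. That is precisely your ``alternative'' route. Your main route instead pulls the two cross-entropy terms together into a single $\alpha H(\bar p, q)$ and invokes the decomposition $H(\bar p, q) = H(\bar p) + D_{\mathrm{KL}}(\bar p \,\|\, q)$ plus Gibbs' inequality. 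The trade-off: your information-theoretic argument directly certifies \emph{global} optimality and uniqueness in one stroke, whereas the paper's Lagrangian computation only produces a stationary point and tacitly relies on convexity of the objective (which the paper does not explicitly invoke) to conclude it is the global minimizer. On the other hand, the Lagrangian route is more self-contained for a reader who does not want to import KL non-negativity as a black box, and it generalizes more transparently to the multi-term risk that appears later in Theorem~\ref{main_theorem_app}. Your attention to the degenerate case $\alpha_1 = \alpha_2 = 0$ and to the support condition for finiteness of $H(\bar p, q)$ is careful bookkeeping that the paper elides; neither point affects the validity of the paper's claim in the regime where it is used (there $\alpha_1, \alpha_2 > 0$ and $u$ has full support, forcing $\bar p$ to have full support as well).
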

\begin{proof}
We assume the probability distributions are discrete. 
The Lagrangian is given by
\begin{align}
    \mathcal{L}(q, \lambda_0) = -\alpha_1\sum_x p_1(x) \log q(x) + -\alpha_2\sum_x p_2(x) \log q(x) + \lambda_0 (\sum_x q(x) -1)
\end{align}

The first-order conditions are 
\begin{align}
\frac{\partial{\mathcal{L}}}{\partial{q(x)}} = -\alpha_1\frac{p_1(x)}{q(x)} + -\alpha_2\frac{p_2(x)}{q(x)} + \lambda_0 = 0 && \\
\frac{\partial{\mathcal{L}}}{\partial{\lambda_0}} = 
\sum_x q(x) - 1 = 0 && \\ 
\end{align}
The first condition allows us to write $q(x) = \frac{\alpha_1 p_1(x)+ \alpha_2 p_2(x)}{\lambda_0}$.
We can substitute this for $q$ into the other first order condition (\ie $q$ normalizes) to obtain
\begin{align*}
    \sum_x \frac{\alpha_1 p_1(x) + \alpha_2p_2(x)}{\lambda_0} = 1
\end{align*}
This implies $\lambda_0 = \alpha_1 + \alpha_2$ and the desired result follows immediately.
\end{proof}

\subsection{Main Theorem}
Our main result is that the minimizer of the risk $q^{*}$ under the distribution $p$ defined by $p_{\text{obs}}$ and $p_d$ and a cross-entropy loss with entropy regularization will exhibit a ``reasoning gap''.
We begin with a characterization of $q^{*}$.
For simplicity, in the result below, we'll use $x_t$ to denote elements of the sequence that correspond either to variable values $v_t$ or variable indices $i_t$.
\begin{theorem}
Let $u$ be the uniform distribution.
Let $p$ be the distribution over variable indices and variable values defined by $p_{obs}$ and $p_d$.
Let $H(p, q)$ denote the cross entropy between distributions $p$ and $q$.
We consider the following risk:
\begin{align}
    R(q) 
    = H(p, q) + H(u, q) 
    \label{eq:risk}
\end{align}
Then $q^{*} = \arg \min_{q} R(q)$ satisfies the following properties. 
\begin{itemize}
    \item For all pairs of adjacent random variables $Y_i$ and $Y_{i+1}$, $q^{*}(Y_i|Y_j) = \lambda p_{d}(Y_{i+1}|Y_i) + (1-\lambda)\frac{1}{|\mathcal{X}|}$ for some $\lambda \in (0,1)$.
    \item For all pairs of non-adjacent random variables, $q^{*}(Y_i | Y_j) = \frac{1}{|\mathcal{X}|}$. 
\end{itemize}
\label{main_theorem_app}
\end{theorem}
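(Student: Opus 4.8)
The plan is to decompose the risk $R(q) = H(p,q) + H(u,q)$ into a sum of independent per-conditional subproblems and then apply Proposition~\ref{lemma_mixture} (the mixture-minimizer lemma) to each piece separately. First I would write out both cross-entropy terms as expectations over the sequence $(i_1, v_1, i_2, v_2)$, using the chain rule for the autoregressive model $q$: for each term the contribution splits into a piece scoring $q(i_1)$, a piece scoring $q(v_1 \mid i_1)$, a piece scoring $q(i_2 \mid i_1, v_1)$, and a piece scoring $q(v_2 \mid i_1, v_1, i_2)$. Because $q$ at each of these positions is a free probability distribution subject only to its own normalization constraint, the minimization decouples: $q^{*}$ can be optimized coordinate-block by coordinate-block. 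The block we care about is the conditional $q(v_2 \mid i_1 = j, v_1 = y_j, i_2 = i)$, i.e. the model's estimate of $Y_i \mid Y_j$.

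For that block, I would track which of the two cross-entropy terms actually "touch" it. The uniform term $H(u,q)$ contributes a term proportional to $\mathbbm{E}_{u}[-\log q(Y_i \mid Y_j)]$ for \emph{every} ordered pair $(i,j)$, since $u$ puts mass on all sequences. The data term $H(p,q)$, by contrast, contributes $-\mathbbm{E}_{p_d(Y_{i+1}\mid Y_i)}[\log q(Y_{i+1}\mid Y_i)]$ (weighted by the $p_{\text{obs}}$ probability of that adjacent pair and the marginal over its order) \emph{only} for adjacent pairs, because $p_{\text{obs}}(\{i,j\}) = 0$ whenever $|i-j| > 1$ forces $p(i_1 = j, v_1 = y_j, i_2 = i, \dots) = 0$ for non-adjacent $i,j$. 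So for non-adjacent pairs the only term involving $q(Y_i \mid Y_j)$ is the uniform-regularization term, whose minimizer is the uniform distribution $\tfrac{1}{|\mathcal{X}|}$ — that gives the second bullet. For adjacent pairs the relevant objective is $\alpha_1 \mathbbm{E}_{p_d(Y_{i+1}\mid Y_i)}[-\log q] + \alpha_2 \mathbbm{E}_{u}[-\log q]$ with strictly positive constants $\alpha_1, \alpha_2$ coming from $p_{\text{obs}}$ weights and the uniform mass, so Proposition~\ref{lemma_mixture} immediately gives $q^{*}(Y_{i+1}\mid Y_i) = \lambda\, p_d(Y_{i+1}\mid Y_i) + (1-\lambda)\tfrac{1}{|\mathcal{X}|}$ with $\lambda = \alpha_1/(\alpha_1+\alpha_2) \in (0,1)$ — the first bullet. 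One should also check $\lambda$ is genuinely interior: $\alpha_1 > 0$ because $p_{\text{obs}}$ assigns positive probability to each adjacent pair, and $\alpha_2 > 0$ because $u$ assigns positive mass to every sequence, so $\lambda \notin \{0,1\}$.

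The main obstacle, and the part deserving real care, is the bookkeeping that makes the decoupling rigorous: I need to argue that a single scalar $q(v_2 \mid i_1, v_1, i_2)$ for a fixed triple never appears in two conflicting subproblems, and that the normalization constraints $\sum_{v_2} q(v_2 \mid i_1, v_1, i_2) = 1$ are separate for distinct conditioning triples, so the Lagrangian genuinely separates and Proposition~\ref{lemma_mixture} applies verbatim with no interaction terms. I would also need to be careful about sequences in the support of $u$ but not $p$ — the conditional $q(Y_i \mid Y_j)$ for non-adjacent $(i,j)$ is pinned down \emph{only} by $u$, which is exactly why it collapses to uniform, and I want to state cleanly that the $p$ term contributes nothing there rather than contributing a vacuous zero that could be misread. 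Once the separation is set up, each subproblem is a one-line application of the proposition, so the theorem follows.
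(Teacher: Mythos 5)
Your proposal is correct and follows essentially the same route as the paper's proof: decompose the risk by timestep/conditioning context so that the optimization over $q(\cdot \mid i_1, v_1, i_2)$ separates across contexts, observe that the $H(p,q)$ term contributes only for adjacent index pairs because $p_{\text{obs}}$ vanishes off $\mathcal{Y}_{\text{obs}}$ while $H(u,q)$ contributes everywhere, and then invoke Proposition~\ref{lemma_mixture} blockwise. The one small addition you make — explicitly checking $\alpha_1, \alpha_2 > 0$ so that $\lambda$ is strictly interior — is a worthwhile detail that the paper leaves implicit in its formula $\lambda_{i,j} = u(i_{1:2},v_1)/(u(i_{1:2},v_1)+p(i_{1:2},v_1))$.
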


\begin{proof}
We can write the risk as a sum across timesteps. 
\begin{align}
    R(q)
    = 
    \sum_{t=1}^T \mathbbm{E}_{p(x_{1:t})}[-\log q(x_t | x_{1:t-1})] 
    + 
    \sum_{t=1}^T \mathbbm{E}_{u(x_{1:t})}[-\log q(x_t | x_{1:t-1})] \\   
    \label{eq: main}
\end{align}

Let us consider the left sum. 
By the law of iterated expectations, we can decompose the fourth term in the sum where $x_4$ is a variable value as
\begin{align}
\mathbbm{E}_{p(i_{1:2}, v_{1:2})}[-\log q(v_2 | i_{1:2}, v_1)] && \\
= \mathbbm{E}_{p(i_{1:2}, v_1)} [\mathbbm{E}_{p(v_2|i_{1:2}, v_{1})}[-\log q(v_2 | i_{1:2}, v_{1})]]  && \\
= 
\sum_{i_1}
\sum_{v_1}
\sum_{i_2}
\mathbbm{E}_{p(v_2|i_{1:2}, v_{1})}[-\log q(v_2 | i_{1:2}, v_{1})] p(i_{1:2}, v_{1}) && \\
= 
\sum_{(i_{1}, i_2) \in \mathcal{Y}_{\text{obs}}}
\sum_{v_1}
\mathbbm{E}_{p(v_2|i_{1:2}, v_{1})}[-\log q(v_2 | i_{1:2}, v_{1})] p(i_{1:2}, v_{1})
\label{eq:iterated_E1}
\end{align}
Importantly, notice that the outer sum over the variable indices $i_1, i_2$ is over $\mathcal{Y}_{\text{obs}}$ due to the observation distribution.

On the other hand, consider the right sum.
\begin{align}
\mathbbm{E}_{u(i_{1:2}, v_{1:2})}[-\log q(v_2 | i_{1:2}, v_1)] && \\
= \mathbbm{E}_{u(i_{1:2}, v_1)} [\mathbbm{E}_{u(v_2|k_{1:2}, v_{1})}[-\log q(v_2 | i_{1:2}, v_{1})]]  && \\
= 
\sum_{i_1}
\sum_{v_1}
\sum_{i_2}
\mathbbm{E}_{u(v_2|i_{1:2}, v_{1})}[-\log q(v_2 | i_{1:2}, v_{1})] u(i_{1:2}, v_{1}) && \\
= \sum_{(i_1, i_2) \in \mathcal{Y} \times \mathcal{Y}}
\sum_{v_1}
\mathbbm{E}_{u(v_2|i_{1:2}, v_{1})}[-\log q(v_2 | i_{1:2}, v_1)] u(i_{1:2}, v_{1}) && \\
\label{eq:iterated_E_2}
\end{align}

We consider two different cases. 
Suppose the variable indices are adjacent so that $i_2 = {i+1}$ and $i_{1} = i$ for some $i$.
In addition, fix some value for $v_1$. 
Then, we can take exactly two terms, one from the left and right sum.

\begin{align}
\begin{split}
\frac{1}{2} \mathbbm{E}_{p(v_2|i_{1:2}, v_{1})}[-\log q(v_2 | i_{1:2}, v_{1})] p(i_{1:2}, v_{1}) \\     
+ 
\frac{1}{2} \mathbbm{E}_{u(v_2|i_{1:2}, v_{1})}[-\log q(v_2 | i_{1:2}, v_{1}] u(i_{1:2}, v_{1})
\end{split}
\label{ref:eq_cross_entropy_two}
\end{align}

The expectations that appear in Eq~\ref{ref:eq_cross_entropy_two} are either 1) a cross entropy between either $q(Y_i | Y_j)$ and $p_d(Y_i|Y_j)$ or 2) a cross entropy between $q(Y_i | Y_j)$ and the uniform distribution.
By an application of Proposition~\ref{lemma_mixture},
the sum of these two terms is minimized by taking $q^{*}(Y_i | Y_j) = \lambda_{i,j} \frac{1}{|\mathcal{X}} + (1-\lambda_{i,j}) p(Y_i | Y_j)$ for $\lambda_{i,j} = \frac{u(i_{1:2}, v_1)}{u(i_{1:2}, v_1) + p_d(Y_i | Y_j)}$.
The result holds for any adjacent pairs. 

Finally, we consider non-adjacent pairs $Y_i$ and $Y_j$.
By construction, we can never have $i_2 = i, i_{1} = j$ in the left sum. 
We will only have one cross entropy term between $q^{*}(Y_i | Y_j)$ and $u(Y_i)$.
Therefore, minimizing the risk is equivalent to minimizing the entropy regularization term and taking $q^{*}(Y_i | Y_j) = \frac{1}{|\mathcal{X}|}$.
\end{proof}

In the following corollary, we will show that the squared bias of the scaffolded generation estimator, which explicitly marginalizes out intermediate variables, is less than the squared bias of the direct estimator.
For this result, we assume that the true conditional probabilities satisfy a doubly stochastic condition; that is, $\sum_{y_i} p(Y_i = y_i | Y_j = y_j) = 1 = \sum_{y_j} p(Y_i = y_i| Y_j = y_j)$. We discuss how to relax this assumption under different assumptions about the risk minimizer.
\begin{theorem}
For all $y_i, y_j \in \mathcal{X}$ with $|i-j|>1$,
\begin{align*}
    |\mathbbm{E}
    [\hat{q}^{*}_{S}(Y_i = y_i|Y_j=y_j)] - p(Y_i = y_i | Y_j = y_j)
|^2 < 
    |\hat{q}^{*}_{D}(Y_i = y_i | Y_j=y_j)-p(Y_i = y_i | Y_j = y_j)|^2
\end{align*}
\label{ref:estimator_bias_thm}
\end{theorem}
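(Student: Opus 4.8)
The plan is to feed the characterization of $q^{*}$ from Theorem~\ref{main_theorem_app} into a short computation showing that chaining the learned adjacent conditionals yields a \emph{mixture} of the true long‑range conditional $p(Y_i\mid Y_j)$ and the uniform distribution $u$, whose weight on the true conditional shrinks geometrically with the number of chained steps. By the second bullet of Theorem~\ref{main_theorem_app}, for non‑adjacent $Y_i,Y_j$ the direct estimator is exactly uniform, so $\hat q^{*}_{D}(Y_i=y_i\mid Y_j=y_j)=\tfrac{1}{|\mathcal{X}|}$ and its squared bias is $\bigl|\tfrac{1}{|\mathcal{X}|}-p(Y_i=y_i\mid Y_j=y_j)\bigr|^{2}$. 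It therefore suffices to establish
\[
\mathbbm{E}[\hat q^{*}_{S}(Y_i=y_i\mid Y_j=y_j)] \;=\; \Lambda\, p(Y_i=y_i\mid Y_j=y_j) \;+\; (1-\Lambda)\tfrac{1}{|\mathcal{X}|}, \qquad \Lambda\in(0,1),
\]
since then the scaffolded estimator's squared bias is $(1-\Lambda)^{2}$ times the direct estimator's, and $(1-\Lambda)^{2}<1$ closes the argument (the degenerate case $p(Y_i=y_i\mid Y_j=y_j)=\tfrac{1}{|\mathcal{X}|}$, where both biases vanish, is excluded by the strong‑dependency assumption on $p_d$).

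The technical heart is a composition identity for mixture kernels. Given $T(y'\mid y)=\lambda P(y'\mid y)+(1-\lambda)u(y')$ and $T'(y''\mid y')=\lambda' P'(y''\mid y')+(1-\lambda')u(y'')$ with $u$ uniform, I would expand
\[
\sum_{y'} T'(y''\mid y')\,T(y'\mid y)
\]
into its four cross terms. Using $\sum_{y'}P(y'\mid y)=1$ and $\sum_{y'}u(y')=1$, three of the terms become multiples of $u(y'')$; the one that needs the doubly stochastic hypothesis is $\sum_{y'}P'(y''\mid y')u(y')=u(y'')$, i.e.\ $\sum_{y'}P'(y''\mid y')=1$. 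The three uniform coefficients add up to $1-\lambda\lambda'$, and the surviving term is $\lambda\lambda'\,(P'\!\circ\!P)(y''\mid y)$, so the composite is again a mixture $\lambda\lambda'(P'\!\circ\!P)+(1-\lambda\lambda')u$ with weight the product of the two weights.

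To finish I would assemble these pieces. Because in this length‑two setting $q^{*}$ is only ever queried on single‑variable conditionals, and because the Monte Carlo samples are i.i.d.\ while $\hat q^{*}_{S}$ is their average, $\mathbbm{E}[\hat q^{*}_{S}(Y_i=y_i\mid Y_j=y_j)]$ equals the composition, along the path from $Y_j$ to $Y_i$ in the chain, of the adjacent learned kernels $q^{*}(Y_{k\pm1}\mid Y_k)$, each of which by the first bullet of Theorem~\ref{main_theorem_app} is a mixture $\lambda_k\, p(\cdot\mid\cdot)+(1-\lambda_k)u$ with $\lambda_k\in(0,1)$. Iterating the composition identity over the $|i-j|$ edges of this path ($\ge 2$ by hypothesis), using the \emph{two‑sided} doubly stochastic assumption so that $u$ is invariant whichever direction the path runs, and using the (forward or reverse) Markov factorization of $p(Y_i\mid Y_j)$ as the composition of the true adjacent conditionals along that path, gives the claimed mixture with $\Lambda=\prod_k\lambda_k\in(0,1)$.

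I expect the main obstacle to be exactly the composition identity and the observation that it forces the \emph{two‑sided} doubly stochastic condition rather than just the trivial one: when the target index lies below the observed index, the chained conditionals run backwards along the directed chain, and one needs $u$ to be invariant under those reverse transitions too — which is precisely the extra column‑sum hypothesis. The remaining pieces — the four‑term expansion, the telescoping of mixture weights along the path, and the final inequality $(1-\Lambda)^{2}<1$ — are routine bookkeeping.
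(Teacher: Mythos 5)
Your proposal is correct and follows essentially the same route as the paper's proof: show $\mathbbm{E}[\hat q^{*}_{S}]$ is a mixture $\Lambda\, p(Y_i\mid Y_j) + (1-\Lambda)u$ with $\Lambda = \prod_k \lambda_k \in (0,1)$, so its squared bias is $(1-\Lambda)^2$ times that of the uniform direct estimate, using the doubly stochastic assumption to keep $u$ invariant at each step. The paper organizes the calculation as an induction on $|i-j|$ with an explicit base case rather than stating a standalone composition identity for mixture-with-uniform kernels, but the two are the same computation.
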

\begin{proof}
First, we show that $\mathbbm{E}[\hat{q}^{*}_{S}(Y_i=y_i|Y_j=y_j)] = \lambda p(Y_i=y_i|Y_j=y_j) + (1-\lambda)\frac{1}{|\mathcal{X}|}$ for some $\lambda \in (0, 1)$. 
Here the expectation is taken with respect to $y_i \sim  q^{*}(Y_i| Y_{i-1} = y_{i-1}), y_{i-1} \sim q^{*}(Y_{i-1}| Y_{i-2} = y_{i-2}) \ldots y_{j+1} \sim q^{*}(Y_{j+1} | Y_j = y_j)$. 
For conciseness, in our notation, we omit the random variables that the expectation is taken with respect to. \\ 
We prove this by induction on $|i-j|$, the distance between the nodes corresponding to $Y_i$ and $Y_j$ in the directed chain. 
We consider the base case $|i-j| = 2$.  
\begin{align*}
\mathbbm{E}[\hat{q}_S^{*}(Y_3=y_3|Y_1=y_1)] &= \mathbbm{E}[(1-\lambda_{3, 2}) p(Y_3=y_3|Y_2=y_2) + \lambda_{3, 2} \frac{1}{|\mathcal{X}|}]  \\
&= \lambda_{3, 2} \frac{1}{|\mathcal{X}|} + (1-\lambda_{3,2}) \mathbbm{E}[p(Y_{3}=y_3|Y_{2}=Y_2)] \\
&= \lambda_{3,2}\frac{1}{|\mathcal{X}|} + (1-\lambda_{3,2}) \sum_{y_2} p(Y_3 = y_3 | Y_2 = y_2) [(1-\lambda_{2,1}) p(Y_2=y_2|Y_1=y_1) + \lambda_{2,1}\frac{1}{|\mathcal{X}|}]\\
&= \lambda_{3,2}\frac{1}{|\mathcal{X}|} + (1-\lambda_{3,2}) (1-\lambda_{2,1}) \sum_{y_2} p(Y_3 = y_3 | Y_2 = y_2) p(Y_2=y_2|Y_1=y_1) \\
&\phantom{=} + (1-\lambda_{3,2}) \lambda_{2,1} \sum_{y_2} \frac{1}{|\mathcal{X}|} p(Y_3=y_3|Y_2 = y_2) \\
&= \lambda_{3, 2} \frac{1}{|\mathcal{X}|} + (1-\lambda_{3,2}) (1-\lambda_{2,1}) p(Y_3=y_3|Y_1=y_1)\\
&\phantom{=} + (1-\lambda_{3,2}) \lambda_{2,1} \frac{1}{|\mathcal{X}|} \\
&= (1-\lambda) \frac{1}{|\mathcal{X}|} + \lambda p(Y_3=y_3|Y_1=y_1)
\end{align*}

Here $\lambda = (1-\lambda_{3,2})(1-\lambda_{2,1})$. Importantly, the expected value is a strict convex combination of the uniform and conditional probabilities. 
For the induction step, we note that the expectations $\mathbbm{E}[\hat{q}_S^{*}(Y_i = y_i | Y_{j} = y_{j})]$ and $\mathbbm{E}[\hat{q}_S^{*}(Y_{i-1} = y_{i-1} | Y_{j} = y_{j})]$ are related as follows
\begin{align*}
\mathbbm{E}[\hat{q}_S^{*}(Y_i = y_i | Y_{j} = y_{j})] = \lambda\frac{1}{|\mathcal{X}|} + (1-\lambda) \sum_{y_{i-1}} p(Y_i = y_i |Y_{i-1}=y_{i-1}) 
\times \mathbbm{E}[\hat{q}_S^{*}(Y_{i-1} = y_{i-1} | Y_{j} = y_{j})]
\end{align*}

We now use this characterization of the scaffolded estimator as a mixture distribution to prove the desired inequality.
The bias of direct estimator can be computed as $|\frac{1}{|\mathcal{X}|}-p(Y_i = y_i | Y_j = y_j)|^2$.
By the previous results, $\mathbbm{E}[\hat{q}^{*}_{S}(Y_i=y_j|Y_i=y_i)]  = \lambda p(Y_i|Y_j) + (1-\lambda)\frac{1}{|\mathcal{X}|}$ for some $\lambda \in (0, 1)$.
\begin{align*}
|\mathbbm{E}[\hat{q}^{*}_{S}(Y_i|Y_j=y_j)] - p(Y_i = y_i | Y_j = y_j)|^2 && \\
= |\lambda p(Y_i = y_i | Y_j = y_j) + (1-\lambda)\frac{1}{|\mathcal{X}|} - p(Y_i = y_i | Y_j = y_j)|^2 && \\
= |\lambda p(Y_i = y_i | Y_j = y_j) + \frac{1}{|\mathcal{X}|}-\lambda \frac{1}{|\mathcal{X}|} - p(Y_i = y_i | Y_j = y_j)|^2 && \\
= |(\lambda-1)p(Y_i = y_i | Y_j = y_j) + (1-\lambda)\frac{1}{|\mathcal{X}|}|^2 && \\
= (1-\lambda)^2|(\frac{1}{|\mathcal{X}|} - p(Y_i = y_i | Y_j = y_j)|^2 && \\
<
|\frac{1}{|\mathcal{X}|}- p(Y_i = y_i | Y_j = y_j)|^2
\end{align*}
Interestingly, if we assumed that the risk minimizer interpolates between the marginals and the true conditionals, we could relax the doubly stochastic condition.

\end{proof}
\section{Pseudocode for data generation}
\label{data_gen_pseudocode}

This section contains pseudocode for the algorithms we use to generate Bayes nets and select variables to include in a given sample according to the observation distribution.

First, we use Algorithm~\ref{alg:net-generation} to create Bayes nets. This algorithm takes in a number of nodes and edges to create and defines a directed acyclic graph by randomly adding edges between pairs of nodes. Next, it assigns conditional probability tables to the nodes given the values of their parents to create a Bayes net.

\begin{algorithm}
    \caption{Algorithm for generating a Bayes net}
    \label{alg:net-generation}
    \begin{algorithmic}
       \STATE {\bfseries Input:} Number of nodes $N$, number of edges $M$
       \STATE $G = (E, V) \gets $ empty graph
       \FOR{$i \in \{1, \ldots, N\}$}
       \STATE $V \gets V \cup \{Xi\}$
       \ENDFOR
       \FOR{$i \in \{1, \ldots, M\}$}
       \STATE $v_1, v_2 \gets $ random pair of vertices $\in V$
       \WHILE{$(v_1, v_2) \in E$ or $(v_1, v_2)$ would create a cycle}
       \STATE $v_1, v_2 \gets $ random pair of vertices $\in V$
       \ENDWHILE
       \STATE $E \gets E \cup \{(v_1, v_2)\}$
       \ENDFOR
       \STATE $T \gets$ empty associative array
       \FOR{$v \in \textsc{TopologicalSort}(G)$}
       \STATE $t \gets$ empty conditional probability table
       \FOR{$c \in$ possible configurations of $v$'s parents}
       \STATE $p \gets \textsc{Sample}(\textsc{Beta}(\frac{1}{5}, \frac{1}{5}))$
       \STATE $t[c] \gets p$
       \ENDFOR
       \STATE $T[v] \gets t$
       \ENDFOR
       \STATE $D \gets $ Bayesian network defined by graph $G$ and conditional probability tables $T$
       \RETURN $D$
    \end{algorithmic}
\end{algorithm}

For each sample, we only show a subset of all the variables according to an observation distribution. Algorithm~\ref{alg:observation-distribution} shows the procedure we use to select which variables to display in a given sample. We first sample central variable $c$ (uniformly) and a distance $k$ (according to a geometric or Zipfian distribution), then get all the variables within distance $k$ of $c$ in the graph. Next, we remove each variable with probability $0.2$. If any of the held-out pairs remain, we randomly remove one variable from each pair. In the wrong locality structure training conditions, we use a graph $G$ that does not correspond to the net from which our samples are drawn but has the same variable names. In the fully observed condition, we start with all variables, but still remove held-out pairs using the final for loop.

\begin{algorithm}
   \caption{Algorithm for selecting variables to include based on an observation distribution}
   \label{alg:observation-distribution}
    \begin{algorithmic}
       \STATE {\bfseries Input:} Bayes net graph $G = (V, E)$, set of held-out pairs $P$, size distribution $D$
       \STATE $c \gets $ random variable $\in V$
       \STATE $k \gets \textsc{Sample}(D)$
       \STATE $R \gets \{\}$
       \FOR{$v \in V$}
       \IF{$v$ is within distance $k$ of $c$ in $G$}
       \STATE $R \gets R \cup \{v\}$
       \ENDIF
       \ENDFOR
       \FOR{$v \in R$}
       \IF{$\textsc{Sample}$(Bernoulli, $0.2$) $= 1$}
       \STATE $R \gets R \setminus \{v\}$
       \ENDIF
       \ENDFOR
       \FOR{$(v_1, v_2) \in P$}
       \IF{$v_1 \in R$ and $v_2 \in R$}
       \IF{$\textsc{Sample}$(Bernoulli, $0.5$) $= 1$}
       \STATE $R \gets R \setminus \{v_1\}$
       \ELSE
       \STATE $R \gets R \setminus \{v_2\}$
       \ENDIF
       \ENDIF
       \ENDFOR
       \RETURN $R$
    \end{algorithmic}
\end{algorithm}

\section{Full sample of training data}
\label{example_sample}
The following is an example of a sample from a local neighborhood of a Bayes net in string format, as used in training our models:
\begin{verbatim}
###
target: X5
X17=0
X92=0
X13=0
X52=1
X24=1
X26=1
X91=0
X36=0
X34=0
X12=1
X20=0
X5=1
\end{verbatim}
The full training set consists of $1,000,000$ samples like this concatenated together.

\section{Formatting estimators as prompts}
\label{example_prompts}

\subsection{Direct prediction}

In direct prediction, we use a simple prompt that specifies the label and value for the observed variable then states the name of the target variable. For example, if we wanted to estimate $p(X_2|X_1=0)$ we would use the following prompt:
\begin{verbatim}
###
target: X2
X1=0
X2=
\end{verbatim}
We would then take the softmax of the log probabilities the language model assigns to `1' and `0' being the next token and use the resulting probability of `1' as the estimate.

\subsection{Scaffolded generation}

In scaffolded generation, we pre-compute a sequence of intermediate variables, then use the language model to estimate their values. For example, if we wanted to estimate $p(X_4|X_1)$ and we knew $X_2$ and $X_3$ were scaffold variables, we would start with the following prompt:
\begin{verbatim}
###
target: X4
X1=0
X2=
\end{verbatim}
We would then sample the next token (either $0$ or $1$) from the language model and append it to the prompt, followed by the name of the next scaffold variable on a new line. For example, if we got a value of $1$ from the language model, we would give it the following prompt next:
\begin{verbatim}
###
target: X4
X1=0
X2=1
X3=
\end{verbatim}
We repeat this process until all scaffold variables have values, then compute the probability assigned to the target variable at the end, in the same way we do for direct prediction. The final prompt we use to estimate the probability might look like this:
\begin{verbatim}
###
target: X4
X1=0
X2=1
X3=0
X4=
\end{verbatim}

We repeat this sampling process 10 times to produce a Monte Carlo estimate over the values of the scaffold variables and average the probabilities assigned to the target variable over samples.

\subsection{Free generation}

Free generation is similar to scaffolded generation, but we sample from the model to choose \textit{which} intermediate variables to instantiate, rather than just their values. For example, if we were inferring $p(X_4|X_1=0)$ we would first prompt the model like this:
\begin{verbatim}
###
target: X4
X1=0
\end{verbatim}
We then sample the next two tokens from the model, which is exactly enough for one variable name. We add an equals sign, then sample the variable's value in the same way we do in other estimators. For example, our prompt might look like this after generating one intermediate variable and its value
\begin{verbatim}
###
target: X4
X1=0
X5=1
\end{verbatim}
We repeat this process until the model outputs the name of the target variable. At that point, we have a prompt that might look like this
\begin{verbatim}
###
target: X4
X1=0
X5=1
X2=0
X7=0
X3=1
X4=
\end{verbatim}
We then extract the probability of the target variable as in other estimators. Again, we average target variable probabilities over 10 samples of the intermediate variables.

\section{Training details}
\label{training_details}

The main architecture we used is a smaller version of the GPT-2 architecture, with $512$-dimensional embeddings, $10$ layers, and $8$ attention heads. It has $32,573,440$ parameters in total. We fit a Byte Pair Encoding tokenizer to data from our samples, which produced $356$ unique tokens.

Our text was grouped into chunks of 1024 tokens and batches of 3, for a total of 3072 tokens per gradient step. We used the Adam optimizer, with an initial learning rate of $10^{-3}$ and Beta values of $0.9$ and $0.999$.

All models were trained on Nvidia Titan Xp GPUs. They were trained for $300,000$ gradient steps, which took approximately 20 hours each.

\begin{figure}
    \centering
    \includegraphics[width=\linewidth]{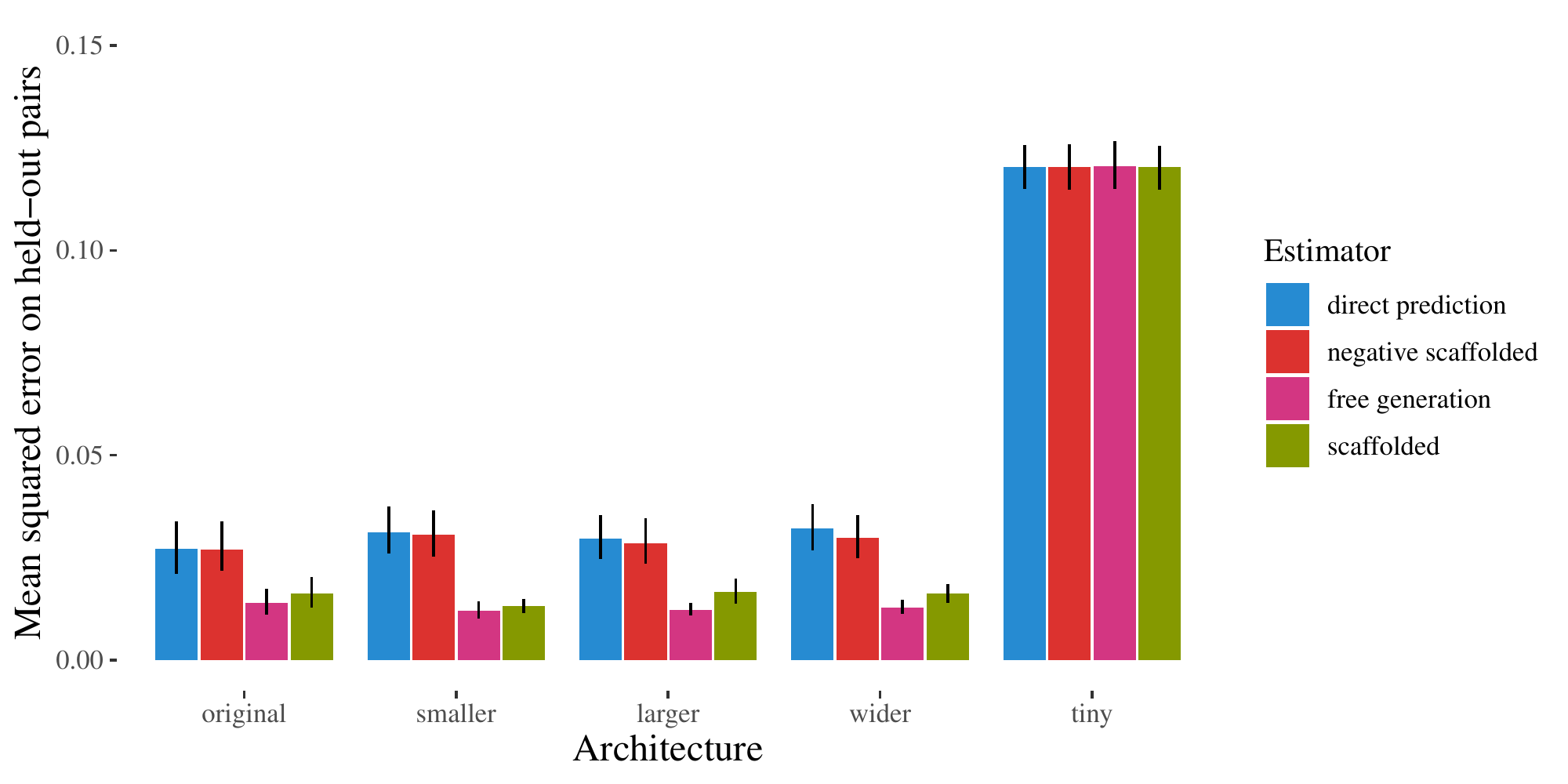}
    \caption{Mean squared error by architecture type and estimator. All architectures were trained on the same data consisting of geometrically-sized local neighborhoods of data. Error bars indicate 95\% confidence intervals.}
    \label{fig:arch-comp}
\end{figure}

\section{Comparison of reasoning gaps across different architectures}
\label{architecture-comparison}

\begin{table}
    \centering
    \caption{Embedding size, number of layers, and number of attention heads for each of the five architectures used in the architecture comparison}
    \begin{tabular}{cccc}
        \toprule
        Architecture type & Embedding Size & \# Layers & \# Attention Heads \\
        \midrule
        original & 512 & 10 & 8 \\
        smaller & 256 & 5 & 4 \\
        larger & 768 & 12 & 12 \\
        wider & 1024 & 3 & 8 \\
        tiny & 4 & 2 & 2 \\
        \bottomrule
    \end{tabular}
    \label{tab:arch-details}
\end{table}

To ensure that our results are robust across different transformer architectures, we compare mean squared error by estimator for four alternative architectures. We focus on training data consisting of geometrically-sized local neighborhoods. Figure~\ref{fig:arch-comp} shows mean squared error by architecture type trained on geometrically-sized local neighborhoods of variables. Results are very similar across all architectures, except for the tiny architecture which fails to match the distribution at all.

We compare the original architecture to a smaller ($4,473,600$-parameter) architecture, a larger ($86,628,864$-parameter) architecture, and a wider architecture with larger embeddings but fewer layers ($39,887,872$ parameters). Finally, we compare against a tiny architecture that is too small to match the distribution of the true Bayes net ($8,644$ parameters). Table~\ref{tab:arch-details} shows the embedding sizes, numbers of layers, and numbers of attention heads for each architecture.

\section{Mean squared error by number of samples}
\label{nsamples}

As mentioned in the main text, the number of Monte Carlo samples taken over intermediate variables influences the variance of the scaffolded and free generation estimators. We demonstrate this computationally by reporting the results of each estimator for different numbers of samples for the model trained on geometrically-sized local neighborhoods of variables. Figure~\ref{fig:num_samples} shows MSE by number of samples using each estimator that averages over samples, compared against the baseline of direct prediction. Free and scaffolded generation both have higher MSE than direct prediction with one or two samples due to the increase in variance. However, they achieve lower MSE when averaging over more than three samples. Negative scaffolded generation exhibits very similar performance to direct prediction for all numbers of samples, which suggests that the values of intermediate variables do not influence the final prediction for that estimator.

\begin{figure}
    \centering
    \includegraphics[width=\linewidth]{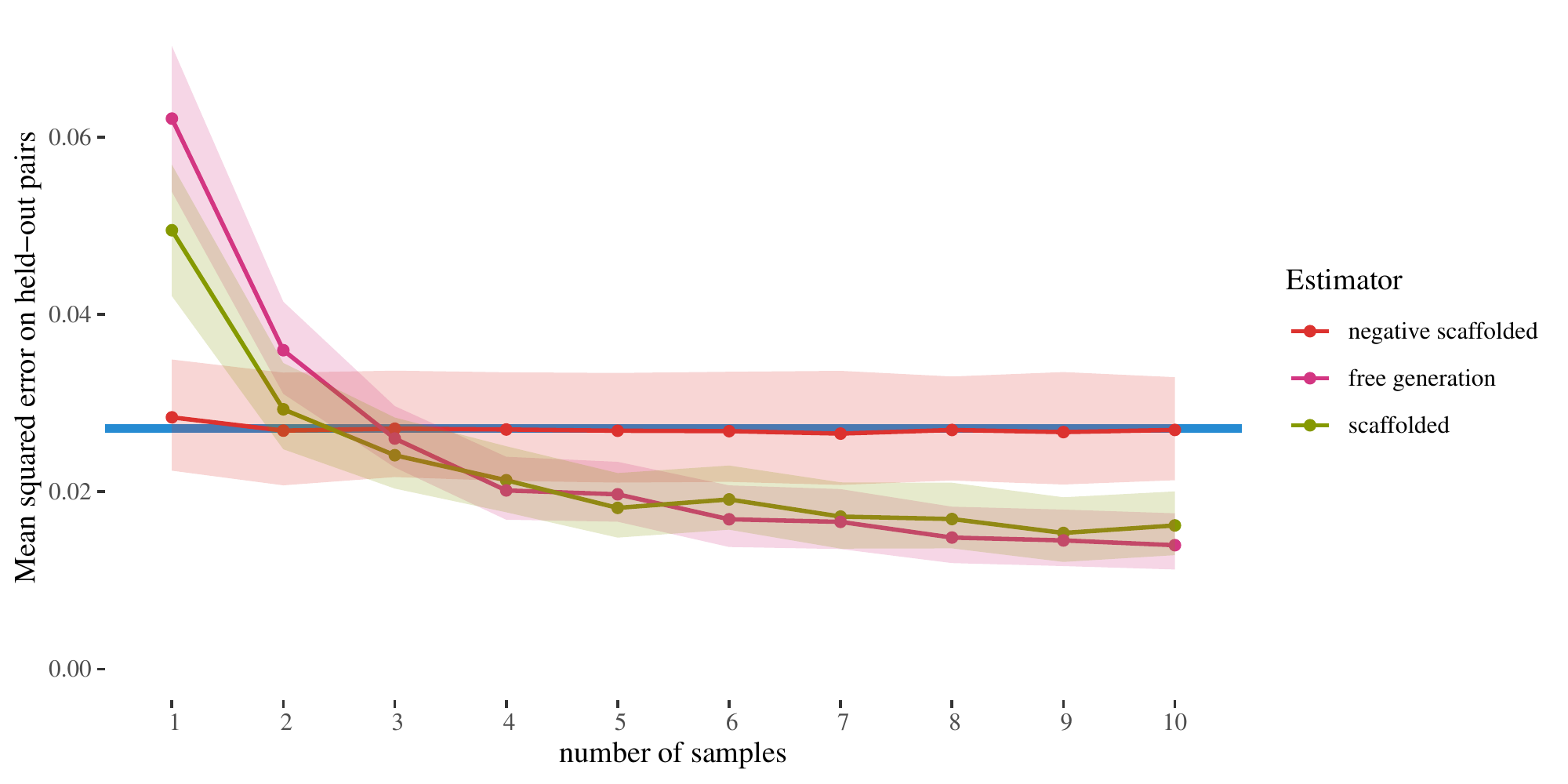}
    \caption{Mean squared error by number of Monte Carlo samples of intermediate variables for models trained on geometrically-sized local neighborhoods of variables. The blue line indicates the MSE achieved by direct prediction. Ribbons indicate 95\% confidence intervals.}
    \label{fig:num_samples}
\end{figure}

\section{Data efficiency of fully observed training with no held-out pairs}
\label{truly_fully_observed}

\begin{figure}
    \centering
    \includegraphics[width=\linewidth]{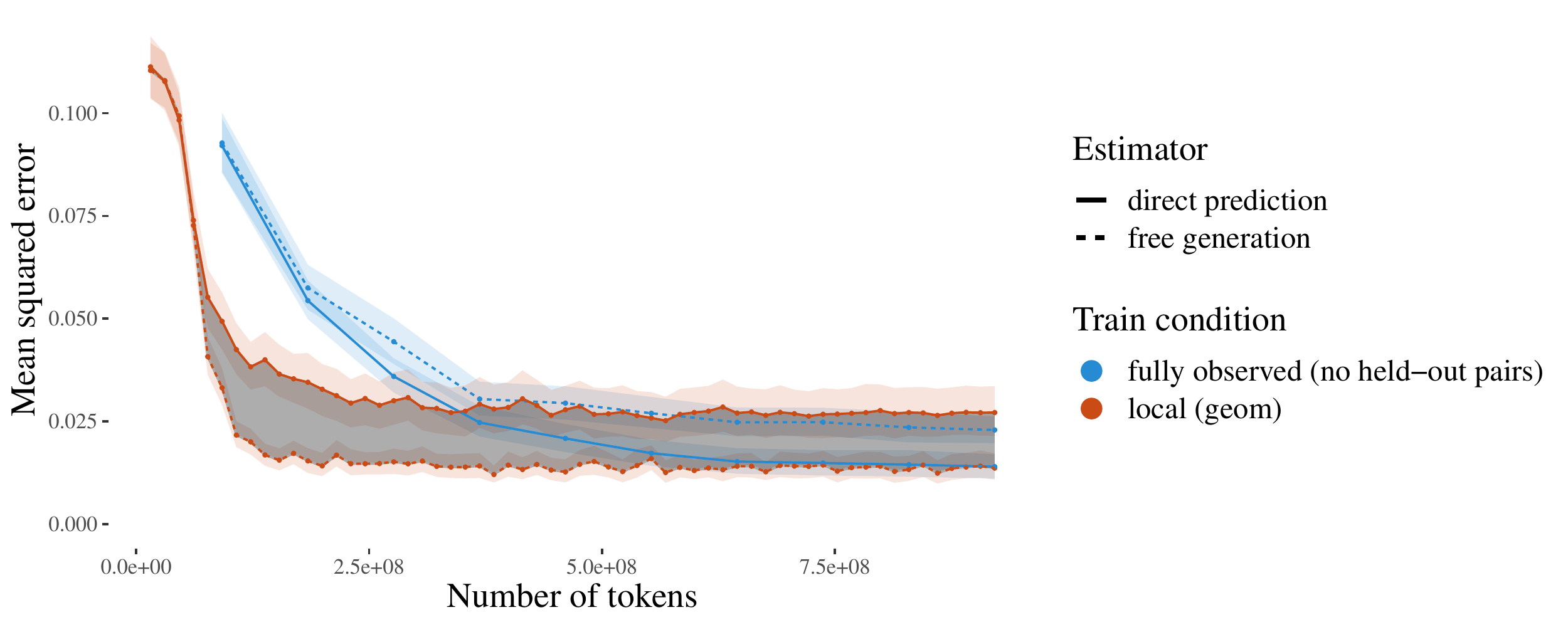}
    \caption{Learning curves comparing mean squared error on held-out pairs, estimated using free and direct prediction for training data consisting of either geometrically-sized local neighborhoods or the full set of variables. Unlike the version reported in the main text, no pairs of variables are held out in this fully observed condition. Even though the model is trained directly on the held-out pairs in the fully observed condition, there is a substantial data efficiency advantage to using locally structured training data and free generation at inference time.}
    \label{fig:truly_fully_observed_learning_curves}
\end{figure}

To supplement our data efficiency analyses, we ran a version of the fully observed training condition without any held-out pairs. In this setting, the transformer can directly memorize the true conditional probabilities, but it takes a long time for direct prediction to perform as well as free generation with locally-structured data. Figure~\ref{fig:truly_fully_observed_learning_curves} compares the performance of direct prediction and free generation for locally structured training data with held-out pairs and fully observed training data with no held-out pairs. Direct prediction with fully observed training data can outperform free generation with locally-structured training data. However, it takes approximately 650 million tokens of training to reach the same performance that free generation with geometric locally-structured training data converges to after only about 200 million tokens of training.

\end{document}